\newtheorem{theorem}{Theorem}
\newtheorem{lemma}[theorem]{Lemma}
\newtheorem{proposition}[theorem]{Proposition}
\newcommand{\beq}{\begin{equation}}
\newcommand{\eeq}{\end{equation}}
\newcommand{\xb}{\bm{x}}
\newcommand{\yb}{\bm{y}}
\newcommand{\zb}{\bm{z}}
\newcommand{\ub}{\bm{u}}
\title{Nonlocal Neural Networks, Nonlocal Diffusion and Nonlocal Modeling}
\author{
  Yunzhe Tao \\
  School of Engineering and Applied Science\\
  Columbia University, USA\\
  \texttt{y.tao@columbia.edu} \\
  %% examples of more authors
   \And
   Qi Sun \\
  BCSRC \& USTC \\
  Beijing, China \\
   \texttt{sunqi@csrc.ac.cn} \\
   \AND
   Qiang Du \\
  School of Engineering and Applied Science\\
  Columbia University, USA\\
   \texttt{qd2125@columbia.edu} \\
   \And
   Wei Liu \\
   Tencent AI Lab \\
   Shenzhen, China \\
   \texttt{wl2223@columbia.edu} \\
  %% \And
  %% Coauthor \\
  %% Affiliation \\
  %% Address \\
  %% \texttt{email} \\
}
\begin{document}
% \nipsfinalcopy is no longer used

\maketitle

\begin{abstract}
Nonlocal neural networks \cite{wang2017non} have been proposed and shown to be effective in several computer vision tasks, where the nonlocal operations can directly capture long-range dependencies in the feature space. In this paper, we study the nature of diffusion and damping effect of nonlocal networks by doing spectrum analysis on the weight matrices of the well-trained networks, and then propose a new formulation of the nonlocal block. The new block not only learns the nonlocal interactions but also has stable dynamics, thus allowing deeper nonlocal structures. Moreover, we interpret our formulation from the general nonlocal modeling perspective, where we make connections between the proposed nonlocal network and other nonlocal models, such as nonlocal diffusion process and Markov jump process.
\end{abstract}

\section{Introduction}\label{sec:intro}
Deep neural networks, especially convolutional neural networks (CNNs) \cite{lecun1995convolutional} and recurrent neural networks (RNNs) \cite{elman1991distributed}, have been widely used in a variety of subjects \cite{lecun2015deep}. However, traditional neural network blocks aim to learn the feature representations in a local sense. For example, both convolutional and recurrent operations process a local neighborhood (several nearest neighboring neurons) in either space or time. Therefore, the long-range dependencies can only be captured when these operations are applied recursively, while those long-range dependencies are sometimes significant in practical learning problems, such as image or video classification, text summarization, and financial market analysis \cite{beran1995long,cont2005long,pipiras2017long,willinger2003long}.

To address the above issue, a nonlocal neural network \cite{wang2017non} has been proposed recently, which is able to improve the performance on a couple of computer vision tasks. In contrast to convolutional or recurrent blocks, nonlocal operations \cite{wang2017non} capture long-range dependencies directly by computing interactions between each pair of positions in the feature space. Generally speaking, nonlocality is ubiquitous in nature, and the nonlocal models and algorithms have been studied in various domains of physical, biological and social sciences \cite{ buades2005non, coifman2006diffusion, du2012analysis,silling2000reformulation,tadmor2015mathematical}.

In this work, we aim to study the nature of nonlocal networks, namely, what the nonlocal blocks have exactly learned through training on a real-world task. By doing spectrum analysis on the weight matrices of the stacked nonlocal blocks obtained after training, we can largely quantify the characteristics of the damping effect of nonlocal blocks in a certain network.

Based on the nature of diffusion observed from experiments, we then propose a new nonlocal neural network which, motivated by the previous nonlocal modeling works, can be shown to be more generic and stable. Mathematically, we can make connections of the nonlocal network to a couple of existing nonlocal models, such as nonlocal diffusion process and Markov jump process. The proposed nonlocal network allows a deeper nonlocal structure, while keeping the long-range dependencies learned in a well-preserved feature space.

\section{Background}\label{sec:bg}
Nonlocal neural networks are usually employed and incorporated into existing cutting-edge model architectures, such as the residual network (ResNet) and its variants, so that one can take full advantage of nonlocal blocks in capturing long-range features. In this section, we briefly review nonlocal networks in the context of traditional image classification tasks and make a comparison among different neural networks. Note that while nonlocal operations are applicable to both space and time variables, we concentrate merely on the spatial nonlocality in this paper for brevity.

\subsection{Nonlocal Networks}
Denote by $X=[X_1, X_2, \cdots, X_M]$ an input sample or the feature representation of a sample, with $X_i$ ($i=1,\cdots,M$) being the feature at position $i$. A nonlocal block \cite{wang2017non} is defined as
\begin{equation}\label{nl-bl-old}
Z_i = X_i + \frac{W_{Z}}{\mathcal{C}_i(X)} \sum_{\forall j} \omega(X_i,X_j)g(X_j)\,.
\end{equation}
Here $1\le i\le M$, $W_Z$ is the weight matrix, and $Z_i$ is the output signal at position $i$. Computing $Z_i$ depends on the features at possibly all positions $j$. Function $g$ takes any feature $X_j$ as input and returns an embedded representation. The summation is normalized by a factor $\mathcal{C}_i(X)$. Moreover, a pairwise function $\omega$ computes a scalar between the feature at position $i$ and those at all possible positions $j$, which usually represents the similarity or affinity. For simplicity, we only consider $g$ in the form of a linear embedding: $g(X_j)=W_g X_j$, with some weight matrix $W_g$. As is suggested in \cite{wang2017non}, the choices for the pairwise affinity function $\omega$ can be, but not limited to, (embedded) Gaussian, dot product, and concatenation.
%\textcolor{red}{\citeauthor{wang2017non} also claims that the nonlocal models are insensitive to the instantiations. All the above choices perform similarly in the empirical evaluation. In this paper, we will give more discussion on the properties of the nonlocal affinity functions and nonlocal operations.}
%\begin{itemize}
%\item \textbf{Gaussian}: $\omega(\xb_i,\xb_j)=e^{\xb_i^\text{T} \xb_j}$.
%\item \textbf{Embedded Gaussian}: $\omega(\xb_i,\xb_j)=e^{\theta(\xb_i)^\text{T} \phi(\xb_j)}$. Here $\theta(\xb_i)=W_\theta \xb_i$ and $\phi(\xb_j)=W_\phi \xb_j$ are two embeddings.
%\item \textbf{Dot product}: $\omega(\xb_i,\xb_j)=\theta(\xb_i)^\text{T}\phi(\xb_j)$.
%\item \textbf{Concatenation}: $\omega(\xb_i,\xb_j)=\sigma(\wb_\omega^\text{T}[\theta(\xb_i), \phi(\xb_j)])$. Here $[\cdot,\cdot]$ denotes concatenation, $\sigma$ is some nonlinearity such as ReLU function, and $\wb_\omega$ is a weight vector that projects the concatenated vector to a scalar.
%\end{itemize}

When incorporating nonlocal blocks into a ResNet, a nonlocal network can be written as
\begin{equation}\label{nl-net}
Z^{k+1} := Z^k + \mathcal{F}(Z^k\, ;W^k)\,,
\end{equation}
where $W^k$ is the parameter set, $k=0,1,\cdots,K$ with $K$ being the total number of network blocks, and $Z^k$ is the output signal at the $k$-th block with $Z^0=X$, the input sample of the ResNet. On one hand, if a nonlocal block is employed, the $i$-th component of $\mathcal{F}$ will be
\begin{equation}\label{nl-op}
\left[\mathcal{F}(Z^k\, ;W^k)\right]_i = \frac{W_Z^k}{\mathcal{C}_i(Z^k)}\sum_{\forall j} \omega(Z^k_i,Z^k_j) (W_g^k Z^k_j)\,,
\end{equation}
where $1\le i\le M$ and $W^k=\{W^k_{Z},  W^k_{g}\}$ includes the weight matrices to be learned. In addition, the normalization scalar $\mathcal{C}_i(Z^k)$ is defined as
\begin{equation}
\mathcal{C}_i(Z^k)=\sum_{\forall j} \omega(Z^k_i,Z^k_j)\,,\quad \text{for }i=1,\cdots,M\,.
\end{equation}
On the other hand, when the $k$-th block is a traditional residual block of, \textit{e.g.}, the pre-activation ResNet \cite{he2016identity}, it contains two stacks of batch normalization (BN) \cite{ioffe2015batch}, rectified linear unit (ReLU) \cite{nair2010rectified}, and weight layers, namely,
\begin{equation}\label{resnet-f}
\mathcal{F}(Z^k; W^k) = W^k_2 f \left(W^k_{1} f(Z^k) \right)\,,
\end{equation}
where $W^k=\{W^k_{1},  W^k_{2}\}$ contains the weight matrices, and $f = \text{ReLU}\circ \text{BN}$ denotes the composition of BN and ReLU.

\subsection{A Brief Comparison with Existing Neural Networks}
The most remarkable nature of nonlocal networks is that the nonlocal network can learn long-range correlations since the summation within a nonlocal block is taken over all possible candidates in the current feature space. The pairwise affinity function $\omega$ plays an important role in defining nonlocal blocks, which in some sense determines the level of nonlocality. On one hand, if $\omega$ is always positive, then the feature space will be totally connected and every two features can interact with each other. On the other hand, if $\omega$ is chosen to be a Dirac-delta function $\delta_{ij}$, namely $\omega(X_i,X_j)=1$ for $i=j$ and is $0$ otherwise, then the nonlocal block gets localized and will act like a simplified residual block. 

Besides, as also mentioned in \cite{wang2017non}, the nature of nonlocal networks is different from other popular neural network architectures, such as CNN \cite{lecun1995convolutional} and RNN \cite{elman1991distributed}. The convolutional or recurrent operation usually takes the weighted sum over only nearest few neighboring inputs or the latest few time steps, which is still of a local sense in contrast with the nonlocal operation.

\begin{figure}
\centering
\subfigure[1 block]{
        \includegraphics[width=.32\linewidth]{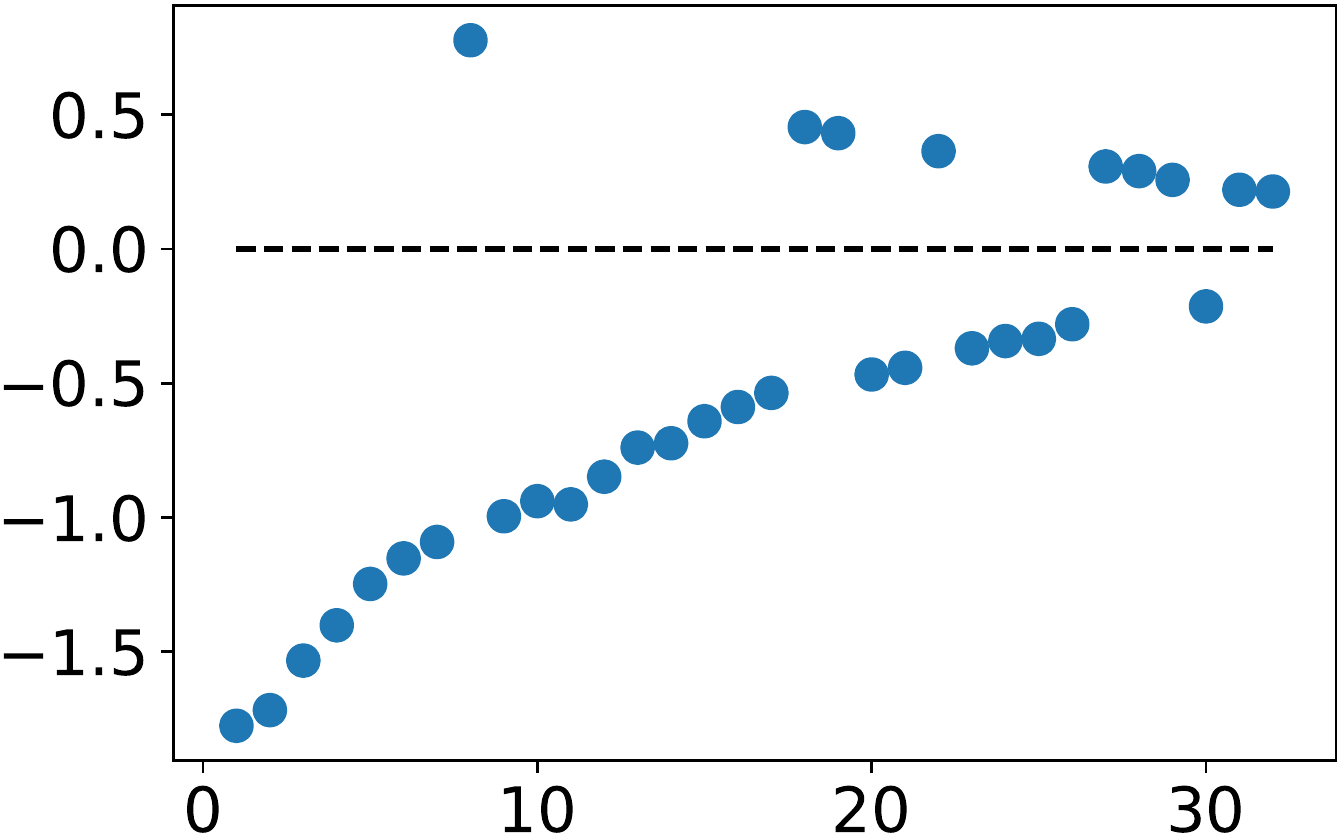}}
\ \ 
\subfigure[2 blocks]{
        \includegraphics[width=.32\linewidth]{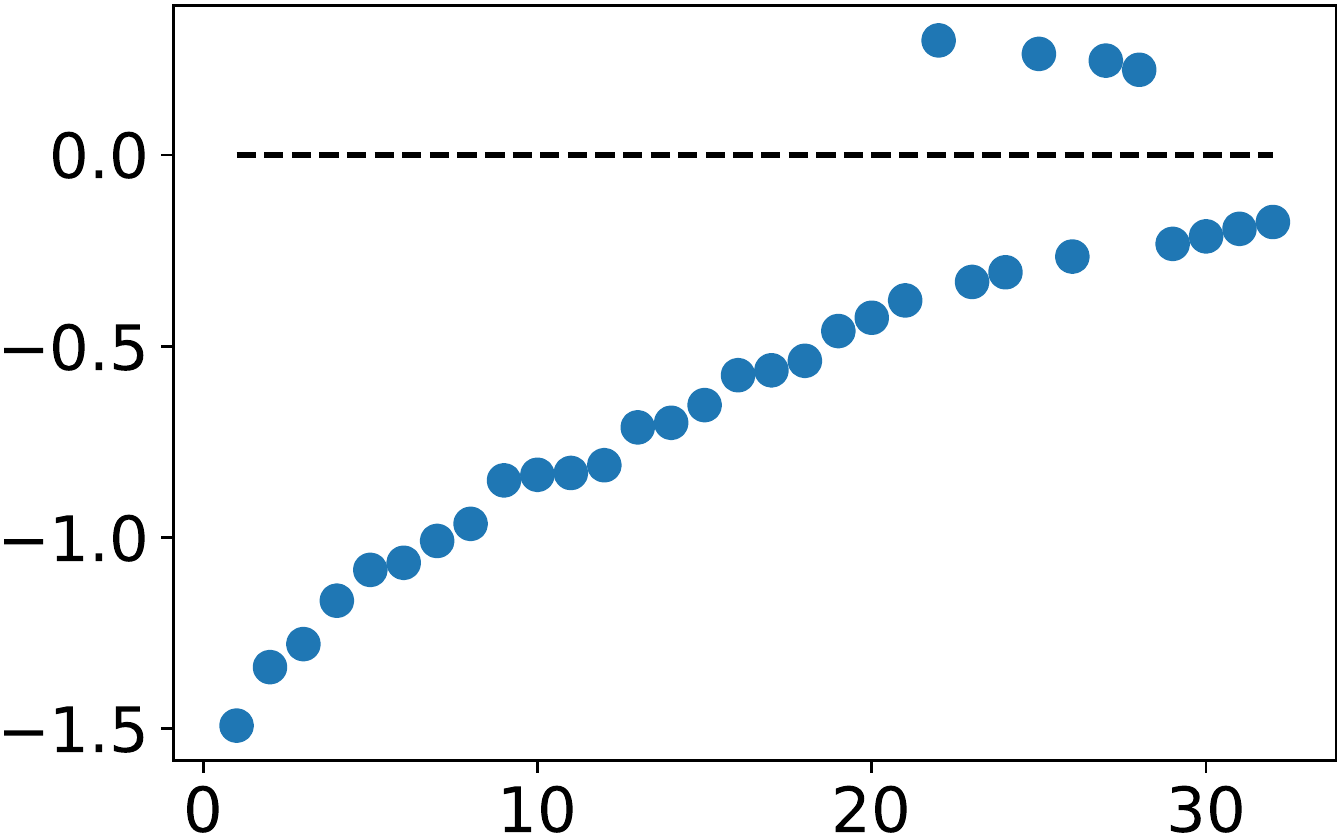}
        \includegraphics[width=.32\linewidth]{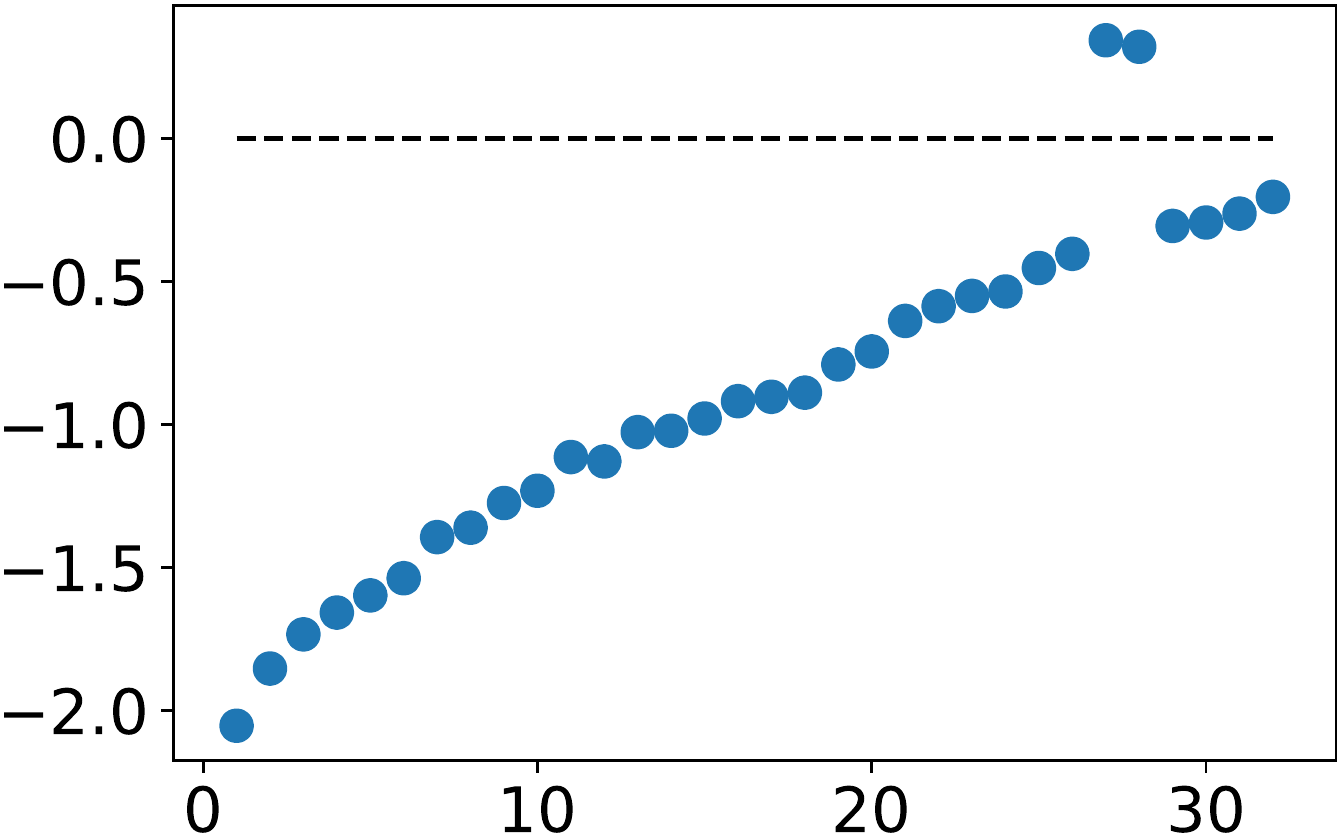}}
\subfigure[3 blocks]{
        \includegraphics[width=.32\linewidth]{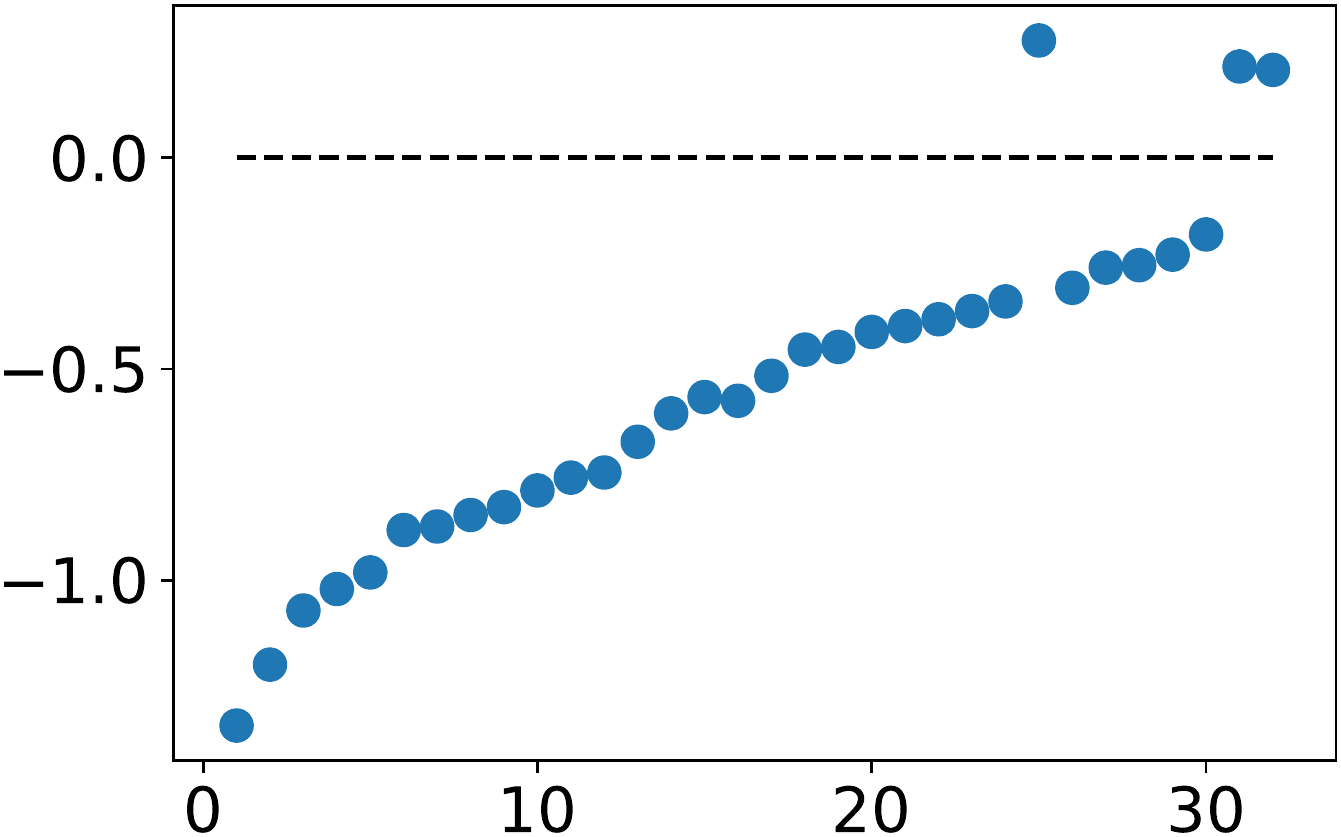}        		
        \includegraphics[width=.32\linewidth]{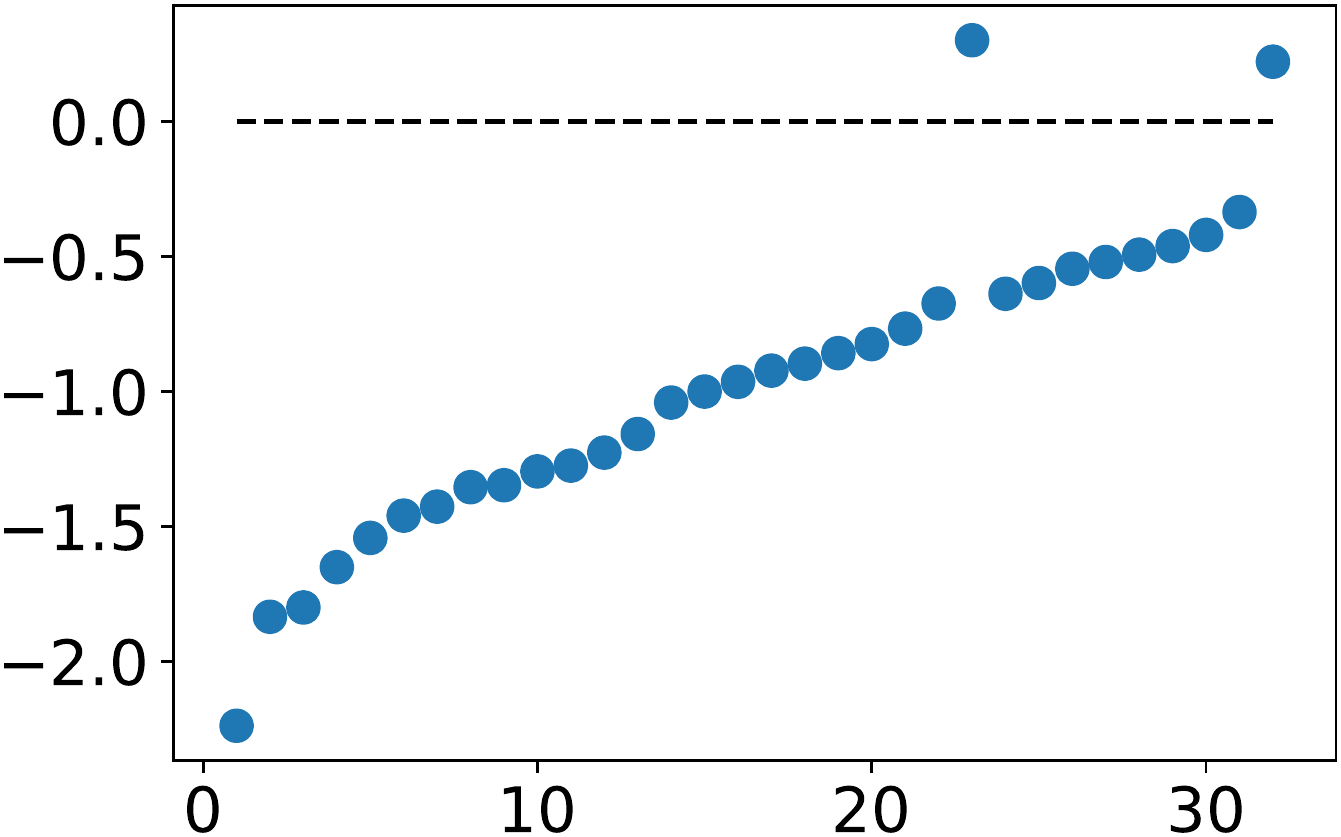}
        \includegraphics[width=.32\linewidth]{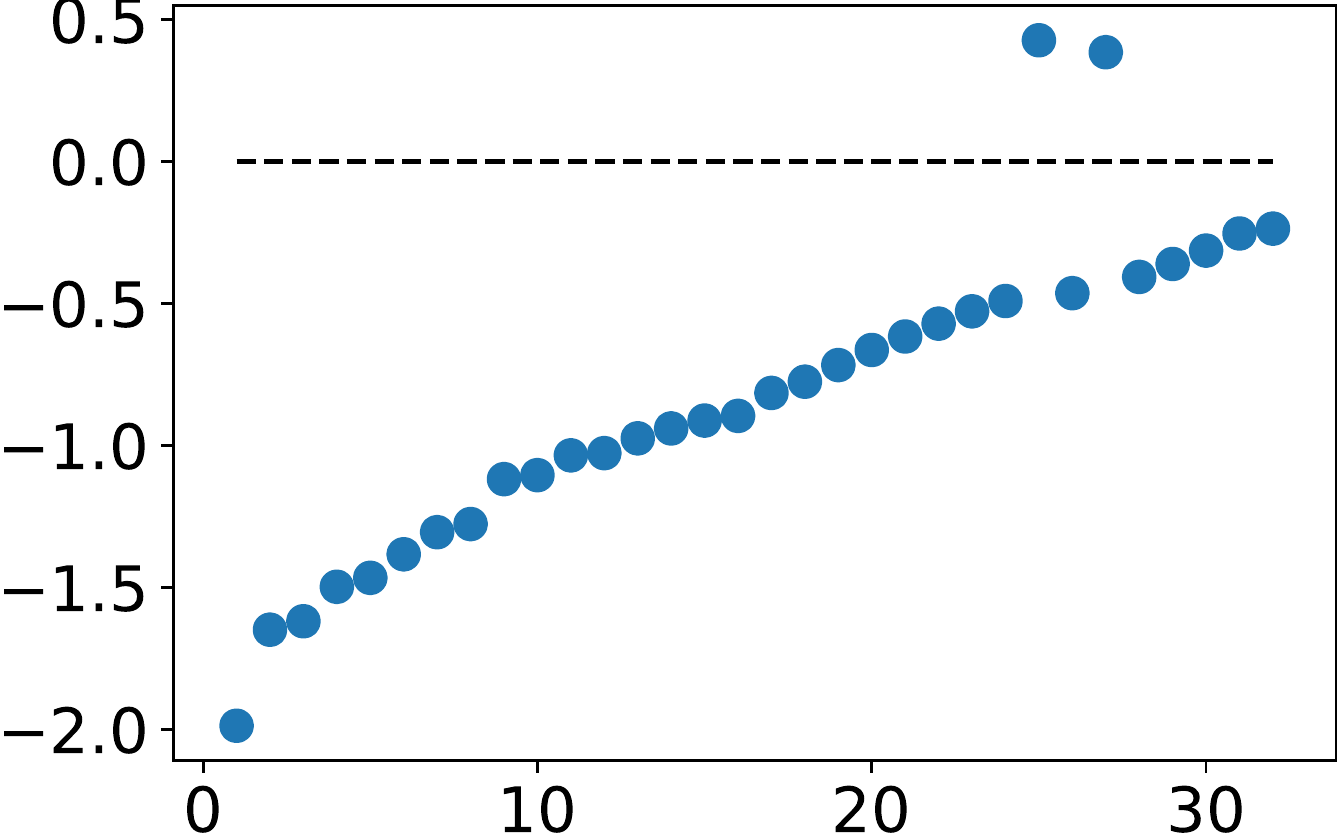}}
      \caption{Top 32 eigenvalues of the symmetric part of weight matrices with respect to nonlocal blocks. Figures (a), (b) and (c) correspond to the cases of adding 1, 2 and 3 nonlocal blocks to PreResNet-20, respectively. Y-axis represents the values.}
\label{fig:nl-old1}
\end{figure}

\section{The Damping Effect of Nonlocal Networks}\label{sec:model}
In this section, we first demonstrate the damping effect of nonlocal networks by presenting weight analysis on the well-trained network. More specifically, we train the nonlocal networks for image classification on the CIFAR-10 dataset \cite{krizhevsky2009learning}, which consists of 50k training images from 10 classes, and do the spectrum analysis on the weight matrices of nonlocal blocks after training. Based on the analysis, we then propose a more suitable formulation of the nonlocal blocks, followed by some experimental evaluations to demonstrate the effectiveness of the proposed model.

\subsection{Spectrum Analysis}
We incorporate nonlocal blocks into the 20-layer pre-activation ResNet (PreResNet) \cite{he2016identity} as stated in Eq. \eqref{nl-net}. Since our goal is to illustrate the diffusive nature of nonlocal operations, we add a different number of nonlocal blocks into a fixed place at the early stage, which is different from the experiments shown in \cite{wang2017non}, where the nonlocal blocks are added to different places along the ResNet. When employing the nonlocal blocks, we always insert them to right after the second residual block of PreResNet-20.

The inputs from CIFAR-10 are images of size $32\times32$, with  preprocessing of the per-pixel mean subtracted. %We train the model with a mini-batch size of 64, and follow \cite{he2016identity} to do the weight initialization \cite{he2015delving} and batch normalization \cite{ioffe2015batch}. 
In order to adequately train the nonlocal network, the training starts with a learning rate of $0.1$ that is subsequently divided by $10$ at 81 and 122 epochs (around 32k and 48k iterations). A weight decay of 0.0001 and momentum of 0.9 are also used. We terminate the training at 164 epochs. The model is with data augmentation \cite{lee2015deeply} and trained on a single NVIDIA Tesla GPU.

To see what the nonlocal blocks have exactly learned, we extract the weight matrices $W_Z$ and $W_g$ after training. Under the current experimental setting, the weight matrices are of dimensions $W_Z\in\mathbb{R}^{64\times 32}$ and $W_g\in\mathbb{R}^{32\times 64}$. Note that if we let $W=W_Z W_g$, Eq. \eqref{nl-op} can be rewritten as
\begin{equation}\label{nl-op2}
\left[\mathcal{F}(Z^k\, ;W^k)\right]_i = \frac{W^k}{\mathcal{C}_i(Z^k)}\sum_{\forall j} \omega(Z^k_i,Z^k_j) Z^k_j\,,
\end{equation}
where the sparse weight matrix $W^k\in\mathbb{R}^{64\times 64}$ has only 32 eigenvalues. We then denote the symmetric part of $W$ by
\begin{equation}
\widetilde{W} = \frac{W + W^\text{T}}{2}\,,
\end{equation}
so that the eigenvalues of  $\widetilde{W}^k$ are all real. Note that the effect of $W$ on the decay properties of the network is determined by the associated quadratic form, which is equivalent to that of the symmetric part of $W$, \textit{i.e.}, $\widetilde{W}$. Therefore, the eigenvalues of $\widetilde{W}^k$, especially those with the greatest magnitudes (absolute values), would describe the characteristics of the weights in nonlocal blocks.

\begin{figure}
\centering
\subfigure[Epoch 36]{
	\includegraphics[width=0.325\linewidth]{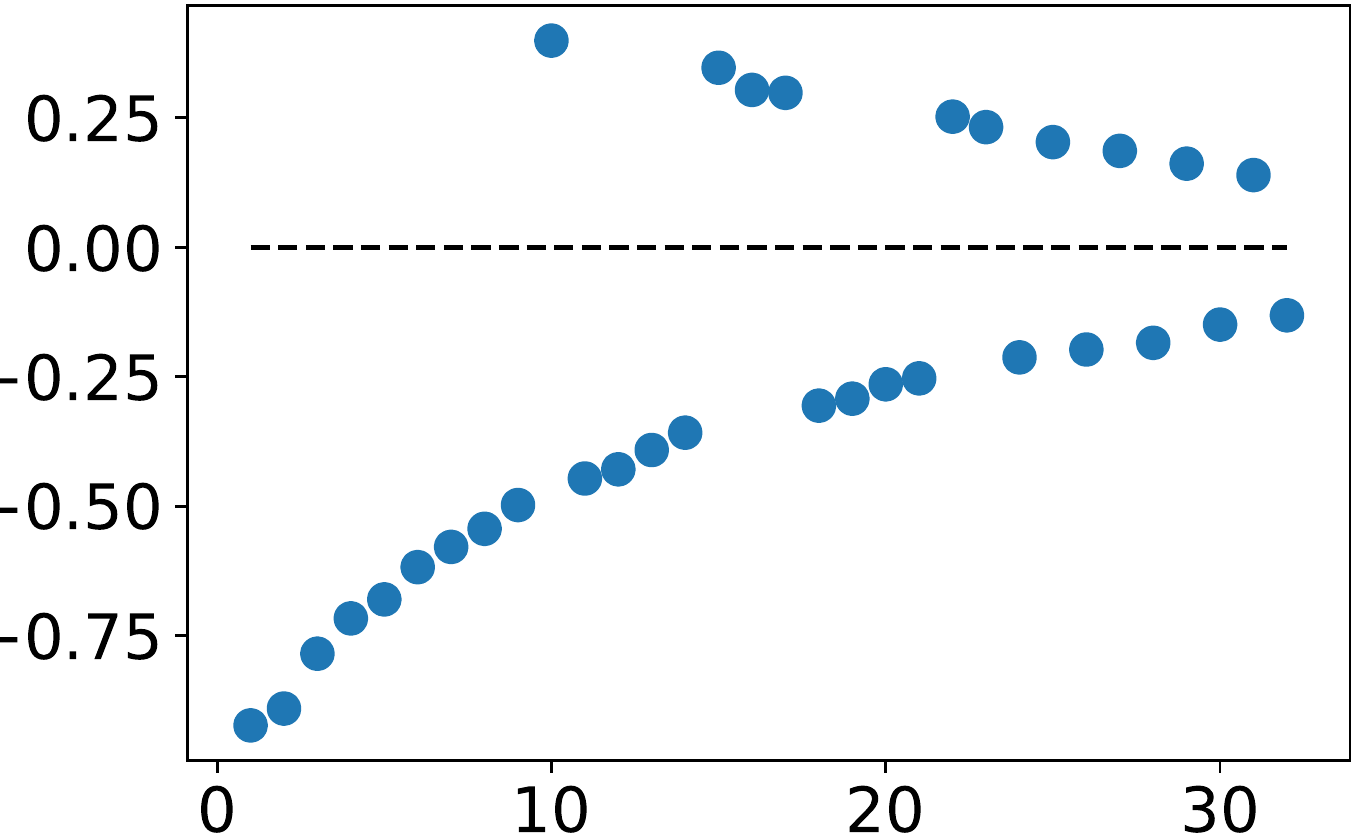}} % Epoch 36
\subfigure[Epoch 82]{
	\includegraphics[width=0.32\linewidth]{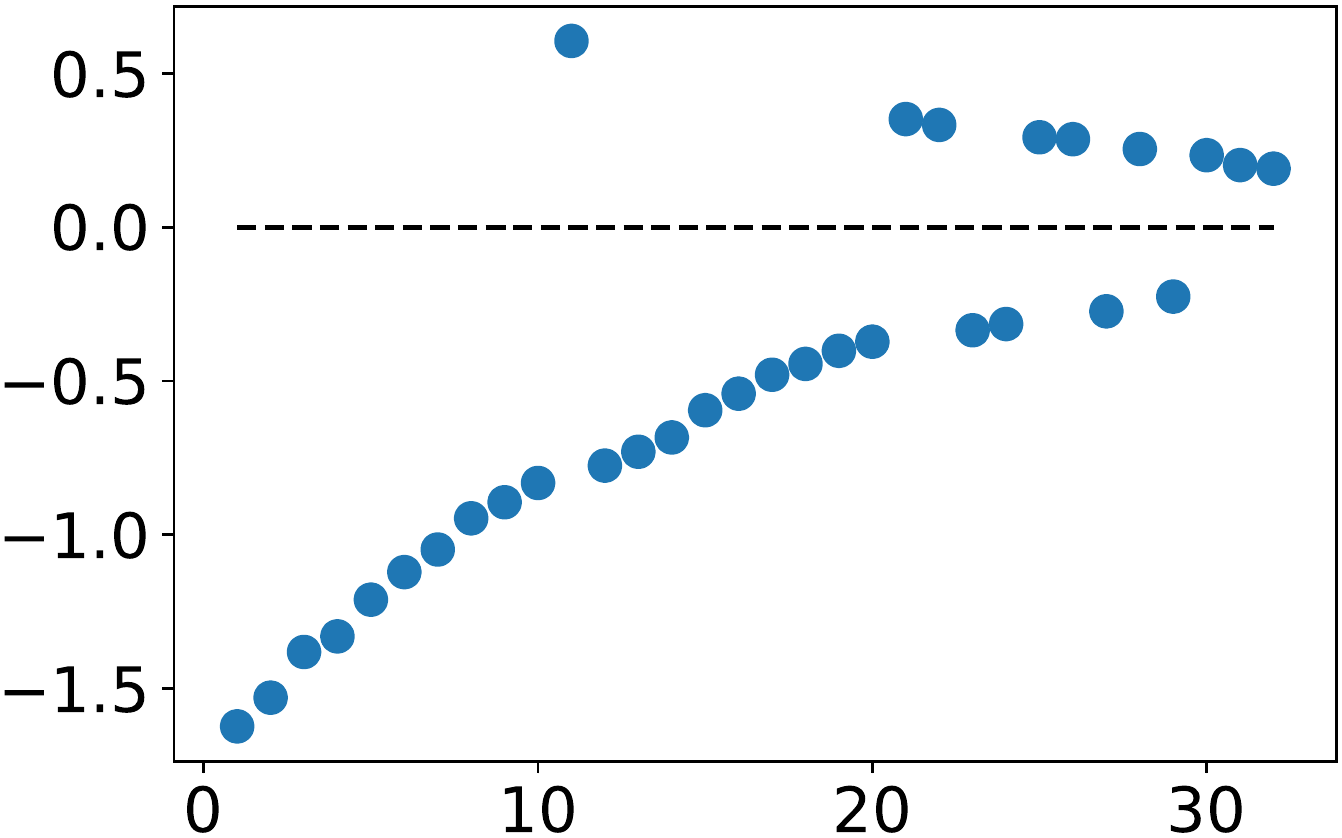}} % Epoch 82
\subfigure[Epoch 125]{
	\includegraphics[width=0.32\linewidth]{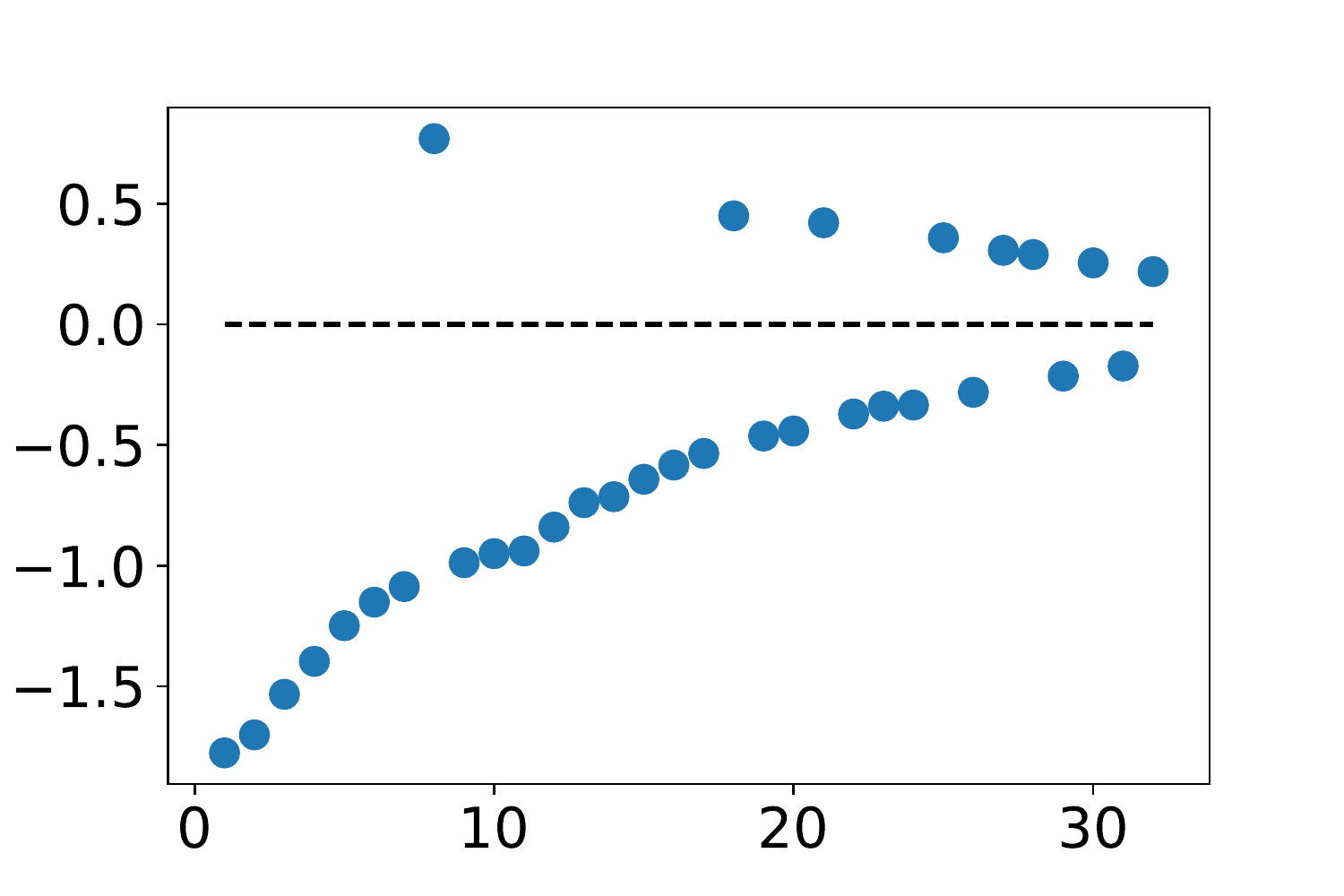}} % Epoch 125
\caption{Top 32 eigenvalues of the symmetrized weight matrices during training when adding 1 nonlocal block. Figures (a), (b) and (c) correspond to the results at Epoch 36, 82 and 125, respectively.}
\vskip -.5em
\label{fig:nl-old2}
\end{figure}

\begin{figure}
\centering
\subfigure[Training curves]{
	\includegraphics[width=0.48\linewidth]{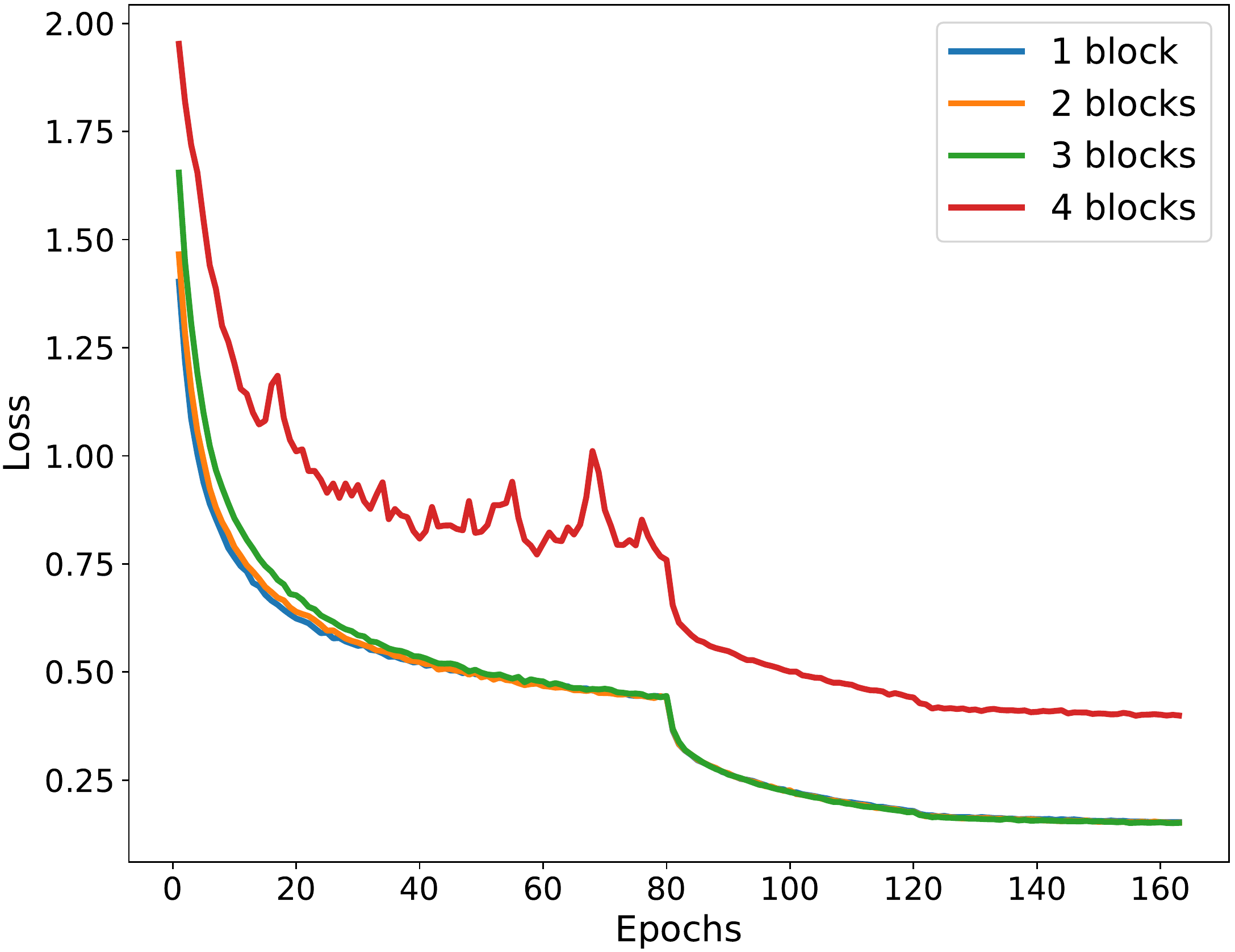}}
\subfigure[Eigenvalues]{
	\includegraphics[width=0.48\linewidth]{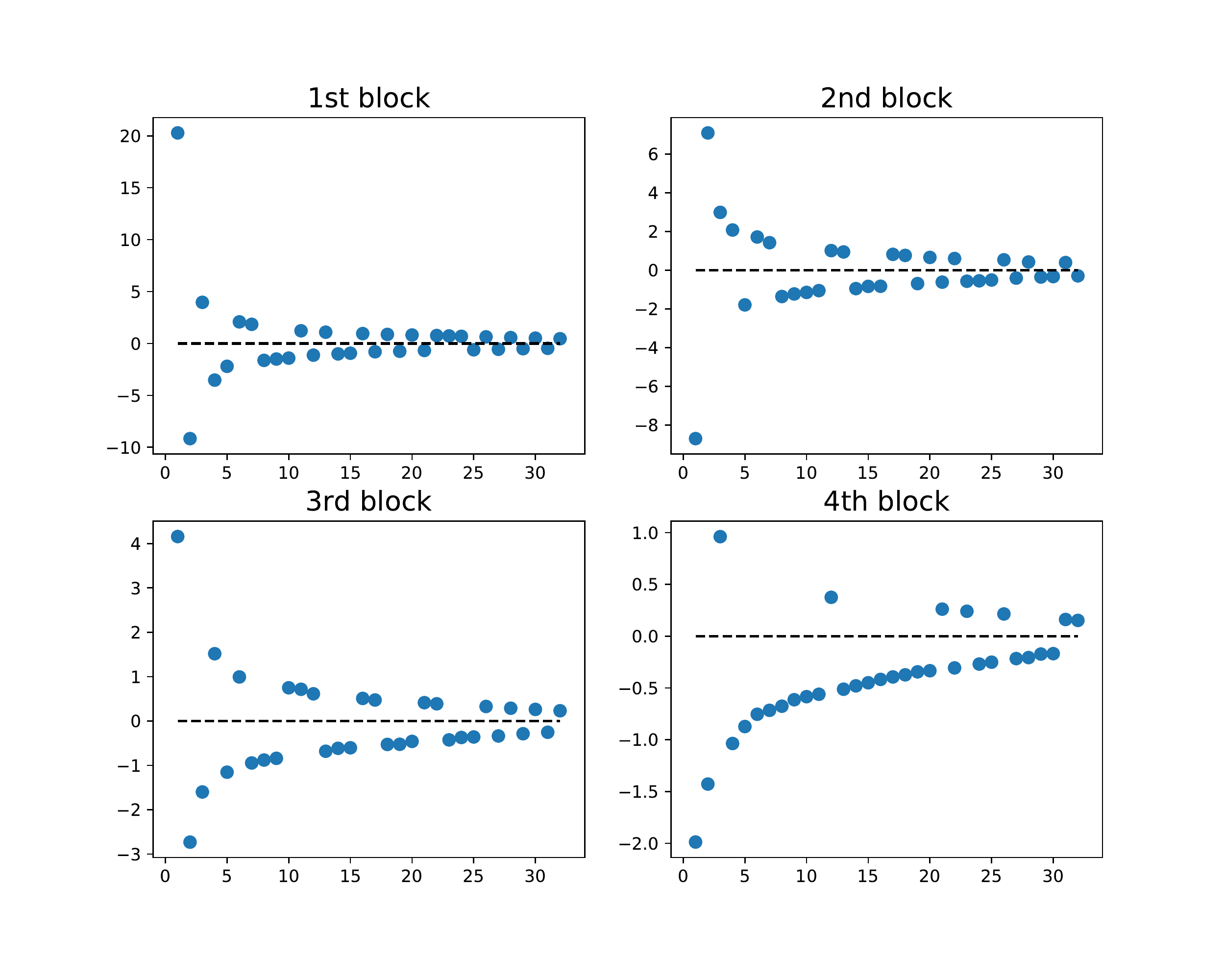}}
\caption{(a) The training curves of nonlocal networks with 1, 2, 3 and 4 nonlocal blocks. (b) Top 32 eigenvalues of the symmetric part of weight matrices when adding 4 nonlocal blocks.}
\vskip -.5em
\label{fig:nl-old3}
\end{figure}

Figure \ref{fig:nl-old1} shows the top 32 eigenvalues of the symmetric part of weight matrices after training when adding 1, 2 and 3 nonlocal blocks to PreResNet-20. One can observe that most of the eigenvalues are negative, especially the first several with the greatest magnitudes. Similar observations can be made along the training process. Figure \ref{fig:nl-old2} plots the top 32 eigenvalues of $\widetilde{W}$ at three intermediate epochs before convergence for the 1-block nonlocal network case. From Eq.~\eqref{nl-net} with $\mathcal{F}$ being the formulation of nonlocal blocks in Eq.~\eqref{nl-op2}, we can see that $Z^k$ tends to vanish regardless of the initial value when imposing small negative coefficients on multiple blocks. Namely, while nonlocal blocks are trying to capture the long-range correlations, the features tend to be damped out at the same time, which is usually the consequence of diffusion. A more detailed discussion can be found in Section \ref{sec:disc}.

However, the training becomes more difficult when employing more nonlocal blocks. Under the current training strategy, when adding 4 blocks, it does not converge after 164 epochs. Although reducing the learning rate or increasing the learning epochs will mitigate the convergence issue, the training loss decreases more slowly than the fewer blocks cases. Figure \ref{fig:nl-old3}(a) shows the learning curves of nonlocal networks with different nonlocal blocks, where we can see that the training loss for the 4-block network is much larger than the others. Note that for the 4-block case, we have reduced the learning rate by the ratio of 0.1 in order to make the training convergent. This observation implies that the original nonlocal network is not robust with respect to multiple nonlocal blocks, which is also shown in Figure \ref{fig:nl-old3}(b). For the 4-block case, we obtain many positive eigenvalues of the symmetrized weight matrices after training. In particular, some of the positive eigenvalues have large magnitudes, which leads to a potential blow-up of the feature vectors and thus makes training more difficult. %A simple numerical experiment is provided in the \textcolor{red}{supplementary} material for more intuitive illustrations of the instability with positive coefficients imposed on the nonlocal blocks. 

\subsection{A New Nonlocal Network}
Given the damping effect for the nonlocal network and the instability of employing multiple nonlocal blocks, we hereby suggest to modify the formulation in order to have a more well-defined nonlocal operation. Instead of Eq.~\eqref{nl-op} or Eq.~\eqref{nl-op2}, we regard the nonlocal blocks added consecutively to the same place as a \textit{nonlocal stage}, which consists of several nonlocal sub-blocks. The nonlocal stage takes input $X=[X_i]$ ($i=1,2,\cdots,M$) as the feature representation when computing the affinity within the stage. In particular, a nonlocal stage is defined as
\begin{equation}\label{nl-net-new}
Z^{n+1}_i := Z^n_i + \frac{W^n}{\mathcal{C}_i(X)}\sum_{\forall j} \omega(X_i,X_j)(Z^n_j-Z^n_i)\,,
\end{equation}
for $i=1,2,\cdots,M$, where $W^n$ is the weight matrix to be learned, $Z^0=X$, and $n=1,2,\cdots,N$ with $N$ being the number of stacking sub-nonlocal blocks in a stage. Moreover, $\mathcal{C}_i(X)$ is the normalization factor computed by
\begin{equation}
\mathcal{C}_i(X)=\sum_{\forall j} \omega(X_i,X_j)\,.
\end{equation}
There are a couple of differences between the two nonlocal formulations
in Eq.~\eqref{nl-bl-old} and Eq.~\eqref{nl-net-new}. On one hand, in a nonlocal stage in Eq.~\eqref{nl-net-new}, the affinity $\omega$ is pre-computed given the input feature $X$ and stays the same along the propagation within a stage, which reduces the computational cost while $\omega(X_i,X_j)$ can still represent the affinity between $Z^n_i$ and $Z^n_j$ to some extent. On the other hand, the residual part of  Eq.~\eqref{nl-net-new} is no longer a weighted sum of the neighboring features, but the difference between the neighboring signals and computed signal. In Section \ref{sec:relation}, we will study the connections between the proposed nonlocal blocks and several other existing nonlocal models to clarify the rationale of our model.

%In order to see it more clearly, we can rewrite \eqref{nl-net-new} as
%\begin{equation}
%Z^{n+1}_i = (1-W^n) Z^n_i + \frac{W^n}{\mathcal{C}_i(X)}\sum_{\forall j} \omega(X_i,X_j)Z^n_j\,,
%\end{equation}
%due to the normalization factor. From the above equation, one can see that the update of $Z^n$ is contributed by two parts: the previous state that being carried directly and the nonlocal interaction with all possible states. The weight matrix $W^n$ is used to determine the amount of local carry and nonlocal interaction. More generically, we can use two weight matrices $A^n$ and $B^n$ to replace $1-W^n$ and $W^n$, respectively, which endows more freedom and approximation capacity to the nonlocal networks.

\subsection{Experimental Evaluation}
To demonstrate the difference between two nonlocal networks and the effectiveness of our proposed method, we present the empirical evaluation on CIFAR-10 and CIFAR-100. Following the standard practice, we present experiments performed on the training set and evaluated on the test set as validation. We compare the empirical performance of PreResNets incorporating into the original nonlocal blocks \cite{wang2017non} or the proposed nonlocal blocks in Eq.~\eqref{nl-net-new}.

\begin{table}[t]
\caption{Validation errors of different models based on PreResNet-20 over CIFAR-10.}
\label{table:results}
%\vskip 0.15in
\begin{center}
%\begin{small}
%\begin{sc}
\begin{tabular}{lc|cc}
\toprule
\multicolumn{2}{c}{Model} & Error (\%)  \\
\Xhline{1pt}
\multirow{9}{*}{The Same Place}& baseline & 8.19 \\
&2-block (original) & 7.83 \\
&3-block (original) & 8.28 \\
&4-block (original) & 15.02 \\
&2-block (proposed) & 7.74 \\ 
&3-block (proposed) & 7.62 \\
&4-block (proposed) & 7.37 \\
&5-block (proposed) & \textbf{7.29} \\ 
&6-block (proposed) & 7.55 \\ 
\Xhline{1pt}
\multirow{2}{*}{Different Places}& 3-block (original) & 8.07 \\
&3-block (proposed) & \textbf{7.33} \\
%\multirow{7}{*}{PreResNet-56}& baseline & 6.85  \\ 
%&2-block (\citeauthor{wang2017non})& 6.60 \\
%&2-block & 6.39 \\ 
%&3-block & 6.39 \\ 
%&4-block & 6.53\\ 
%&5-block & \textbf{6.26}  \\ 
%&6-block & 6.38\\ 
\Xhline{1pt}
\end{tabular}
%\end{sc}
%\end{small}
\end{center}
\vskip -0.6em
\end{table}

Table~\ref{table:results} presents the validation errors of PreResNet-20 on CIFAR-10 with a different number of nonlocal blocks that are added consecutively to the same place or separately to different places as the experiments shown in \cite{wang2017non}. The best performance for each case is displayed in boldface. Note that all the models are trained by ourselves in order to have a fair comparison. We run each model 5 times and report the median. More experimental results with PreResNet-56 on CIFAR-100 are also provided in Table~\ref{table:results2}. 
Based on the results shown in the tables, we give some analysis as follows. 

\begin{table}[!tb]
\caption{Validation errors of different models based on PreResNet-56 over CIFAR-100.}
\label{table:results2}
%\vskip 0.15in
\begin{center}
%\begin{small}
%\begin{sc}
\begin{tabular}{lc|cc}
\toprule
\multicolumn{2}{c}{Model} & Error (\%)  \\
\Xhline{1pt}
\multirow{9}{*}{PreResNet-56}& baseline & 26.57  \\ 
&2-block (original)& 26.13 \\
&3-block (original)& 26.26 \\
&4-block (original)& 34.89 \\
&2-block (proposed) & 26.04 \\ 
&3-block (proposed) & 25.57 \\ 
&4-block (proposed) & 25.43\\ 
&5-block (proposed) & \textbf{25.29}  \\ 
&6-block (proposed) & 25.49\\ 
\Xhline{1pt}
\end{tabular}
%\end{sc}
%\end{small}
\end{center}
\end{table}

\begin{figure}[!tb]
\centering
\subfigure[The 1st block]{
	\includegraphics[width=0.24\linewidth]{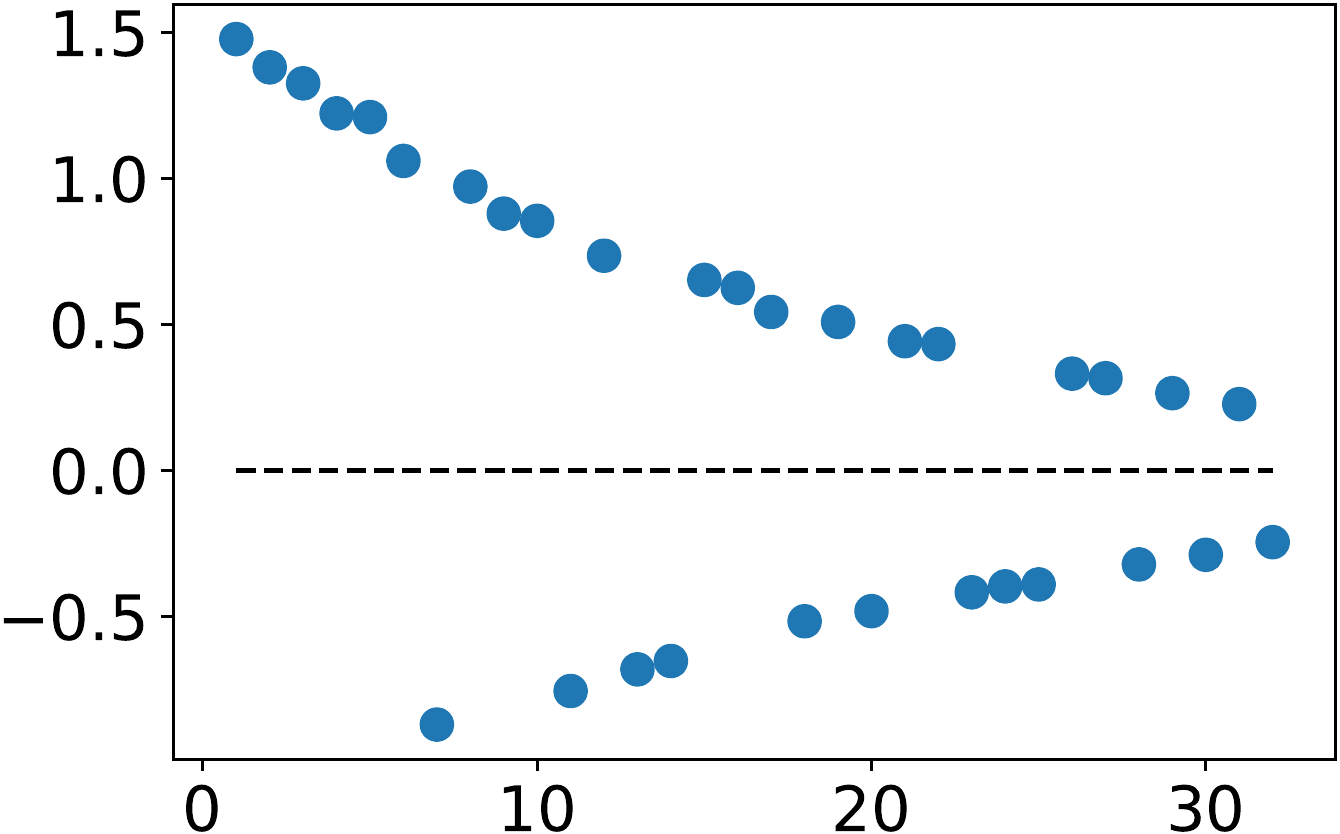}}
\subfigure[The 2nd block]{
	\includegraphics[width=0.24\linewidth]{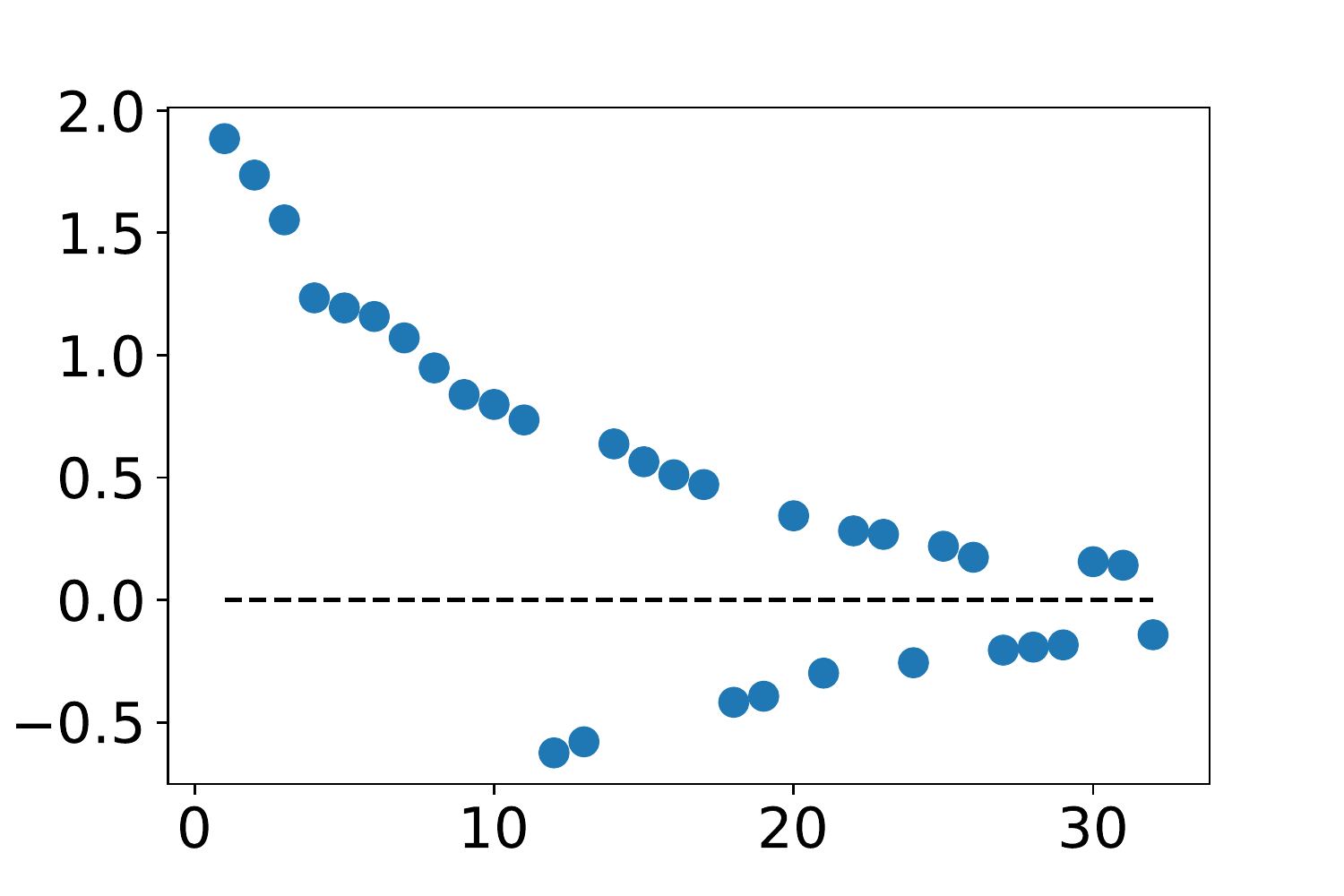}}
\subfigure[The 3rd block]{
	\includegraphics[width=0.24\linewidth]{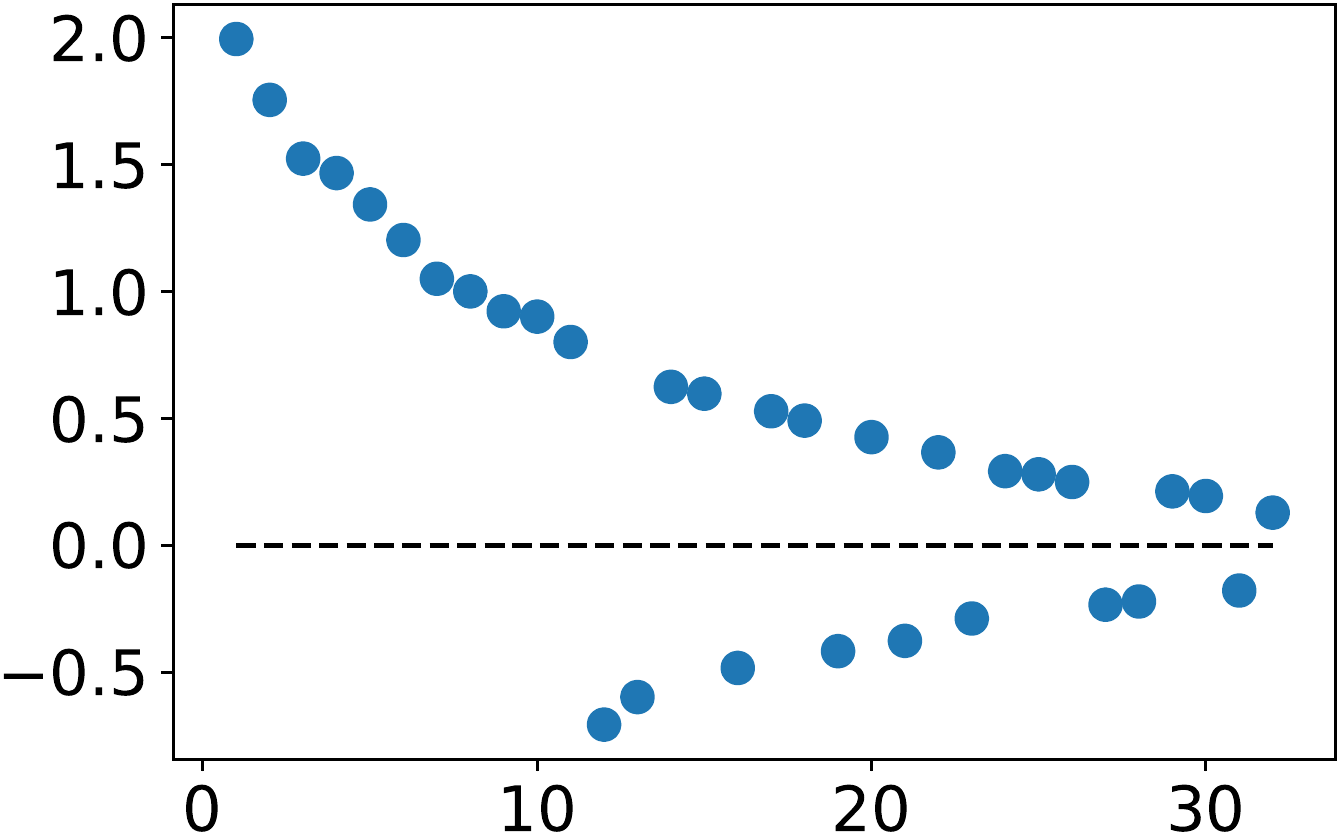}}
\subfigure[The 4th block]{
	\includegraphics[width=0.23\linewidth]{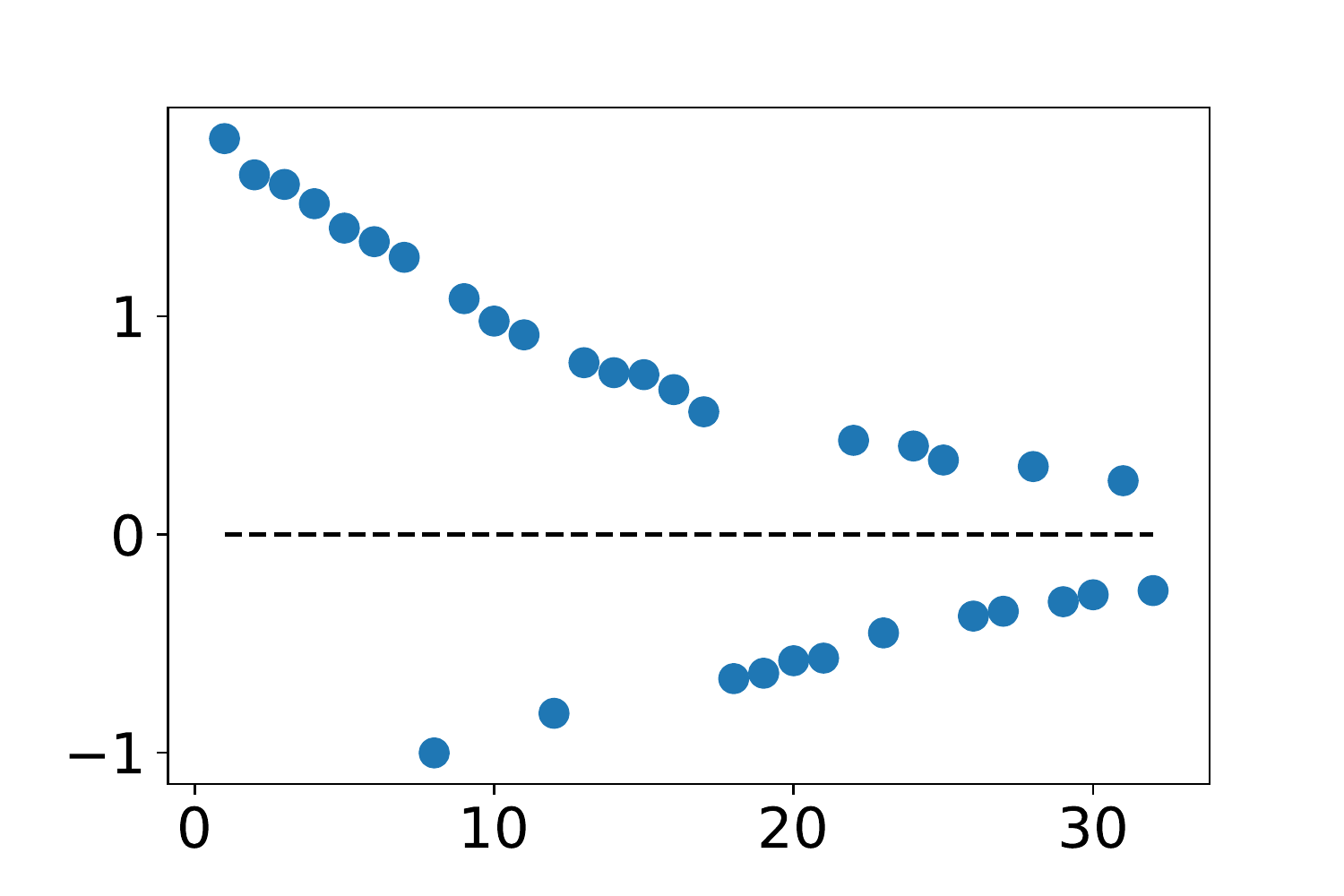}}
\caption{Top 32 eigenvalues of the symmetric part of weight matrices after training when adding 4 proposed nonlocal blocks to PreResNet-20.}
\vskip -0.55em
\label{fig:nl-new}
\end{figure}

First, for the original nonlocal network, since the damping effect cannot be preserved when adding more nonlocal blocks, the training becomes more difficult and thus the validation performs worse. In contrast, the proposed nonlocal network is more robust to the number of blocks. When employing the same number of nonlocal blocks, the proposed model consistently performs better than the original one. Moreover, the proposed model also reduces the computational cost because the affinity function can be pre-assigned in a nonlocal stage. Although the two formulations of nonlocal blocks appear similar, they are different in essence. Figure~\ref{fig:nl-new}  shows the top 32 eigenvalues of the symmetric part of weight matrices after training when adding 4 new nonlocal blocks to PreResNet-20, where we can see that most of the eigenvalues are positive. In the following section, we will show that the formulation of new nonlocal blocks is exactly the nonlocal analogue of local diffusion with positive coefficients. %The numerical experiment presented in the \textcolor{red}{supplementary} material also gives a preliminary illustration.

In addition, one can observe from Table~\ref{table:results} and Table~\ref{table:results2} that adding too many proposed nonlocal blocks reduces the performance. A potential reason is that all the nonlocal blocks in a stage share the same affinity function, which is pre-computed given the input features. Therefore, the kernel function cannot precisely describe the affinity between pairs of features after many sub-blocks. 

Note that in \cite{wang2017non}, the nonlocal blocks are always added individually in each fixed place of ResNets. We also compare the two nonlocal networks in the second part of Table~\ref{table:results}, where we in total add 3 nonlocal blocks to PreResNet-20 but into different residual blocks. Although in this case the two kinds of nonlocal blocks have the  same affinity kernel, the proposed nonlocal network still performs better than the original one, which implies that the proposed nonlocal blocks are more suitably defined with respect to the nature of diffusion.

\section{Connection to Nonlocal Modeling}\label{sec:relation}
In this section, we study the relationship between the proposed nonlocal network and a couple of existing nonlocal models, {\em i.e.}, the nonlocal diffusion systems and Markov chain with jump processes. Then we make a further comparison between the two formulations of nonlocal blocks.

\subsection{Affinity Kernels}
We make some assumptions here on the affinity kernel $\omega$ in order to make connections to other nonlocal modeling approaches. Let $\Gamma$ be the input feature space such that $X_i$'s are finite data samples drawn from $\Gamma$. Denote by $\mathcal{K}$ the kernel matrix such that $\mathcal{K}_{ij}=\omega(X_i, X_j)$. We first assume that $\mathcal{K}$ has a finite Frobenius norm, namely,
\begin{equation}
\|\mathcal{K}\|_F^2 = \sum_{\forall i,j} \big|\omega(X_i,X_j)\big|^2 < \infty\,.
\end{equation}
Then without loss of generality, we can assume that the kernel matrix $\mathcal{K}$ has sum 1 along its rows, {\em i.e.}, the normalization factor is assumed to be 1 for all $i$:
\begin{equation}
\mathcal{C}_i(X)=\sum_{\forall j}\omega(X_i,X_j)=1\,,\quad\text{for }i=1,\cdots,M\,.
\end{equation}
%For general kernels, this can be done by letting
%\begin{equation}
%\tilde{\omega}(X_i,X_j)= \frac{\omega(X_i,X_j)}{\sum_{\forall j} \omega(X_i,X_j) }
%\end{equation}
%be the alternative affinity function. 
We further assume that the kernel function is symmetric and nonnegative within $\Gamma$, namely for all $X_i,X_j\in\Gamma$,
\begin{equation}\label{kernel-cond}
\omega(X_i,X_j)=\omega(X_j,X_i)\ \ \ \text{and}\ \ \  \omega(X_i,X_j)\ge 0\,.
\end{equation}
%By the condition of symmetry, we assume that the similarity of two features $X_i$ and $X_j$ is fixed and does not change when they switch the position. 
%Indeed, the condition of symmetry is not really a constraint since we can always consider a symmetrized version of the kernel. 
Note that for the instantiations given in \cite{wang2017non}, only the Gaussian function is symmetric. However, in the embedded Gaussian and dot product cases, we can instead embed the input features by some weight matrix as the parameter and then feed the pre-embedded features into the nonlocal stage and use the traditional Gaussian or dot product function as the affinity, namely, we replace the kernel by
\begin{equation}
\omega_\theta(X_i,X_j)=\omega\big(\theta(X_i),\theta(X_j)\big)\,,
\end{equation}
where $\theta$ is a linear embedding function.

\subsection{Nonlocal Diffusion Process}
Define the following discrete nonlocal operator for $X=[X_1,\cdots,X_M]$ and $X_i\in\Gamma$:
\begin{equation}\label{nl-op-dis}
(\mathcal{L}^h Z)_i := \sum_{\forall j} \omega(X_i,X_j)(Z_j-Z_i)\,,
\end{equation}
where the superscript $h$ is the discretization parameter. Eq.~\eqref{nl-op-dis} can be seen as a reformulation of nonlocal operators in some previous works, such as the graph Laplacian \cite{chung1997spectral} (as well as its applications \cite{liu2009robust,liu2010large,liu2012robust,wu2018multi}), nonlocal-type image processing algorithms \cite{buades2005review,gilboa2007nonlocal}, and diffusion maps \cite{coifman2006diffusion}. All of them are nonlocal analogues \cite{du2012analysis} of local diffusions. %From the numerical examples presented in the supplementary material, we can also see that $\mathcal{L}^h$ acts like a diffusive process for positive $W^n$'s. 
Then the equation
\begin{equation}\label{nl-eq-dis}
Z^{n+1} = Z^{n} + \mathcal{L}^h Z^n
\end{equation}
describes a discrete nonlocal diffusion, where $Z$ satisfies that $Z^0 = X$ for $X_i\in\Gamma$. The above equation is equivalent to the proposed nonlocal network in Eq.~\eqref{nl-net-new} with positive weights.
%It is easy to check that $Z^n$ is mean preserved for all $n\ge 0$. Moreover, with any initial condition $X$, $Z^n$ converges to the mean of $X_i$'s. In addition, it can be proved that the convergence is of an exponential rate, which is also natural for a diffusive process.

In general, Eq.~\eqref{nl-eq-dis} is a time-and-space discrete form, where the superscript $n$ is the time step parameter, of the following nonlocal integro-differential equation:
\begin{equation}\label{nl-eq-con}
\left\{
\begin{aligned}
&\zb_t(\xb,t)-\mathcal{L} \zb(\xb)=0\,,\\
& \zb(\xb,0)=\ub(\xb)\,,
\end{aligned}
\right.
\end{equation}
for $\xb\in\Omega$ and $t\ge0$. Here $\ub$ is the initial condition, $\zb_t:=\partial\zb / \partial t$, and $\mathcal{L}$ defines a continuum nonlocal operator:
\begin{equation}
\mathcal{L}\zb(\xb) := \int_\Omega \rho(\xb,\yb)\big(\zb(\yb)-\zb(\xb)\big)d\yb\,,
\end{equation}
where the kernel
\begin{equation}
\rho(\xb,\yb):=\omega\big(\ub(\xb),\ub(\yb)\big)
\end{equation}
is symmetric and positivity-preserved. We provide in the supplementary material some properties of the nonlocal equation \eqref{nl-eq-con}, in order to illustrate the connection between Eq.~\eqref{nl-eq-con} and local diffusions, and demonstrate that the proposed nonlocal blocks can be viewed as an analogue of local diffusive terms. 

Since we assume that the kernel matrix $\mathcal{K}$ has a finite Frobenius norm, we have
\begin{equation}
\int_\Omega\int_\Omega |\rho(\xb,\yb)|^2 d\xb d\yb <\infty.
\end{equation}
Therefore, the corresponding integral operator $\mathcal{L}$ is a Hilbert-Schmidt operator so that Eq.~\eqref{nl-eq-con} and its time reversal are both well-posed and stable in finite time. The latter is equivalent to the continuum generalization of the nonlocal block with a negative coefficient (eigenvalue). Although $\mathcal{L}$ is a nonlocal analogue of the local Laplace (diffusion) operator, the nonlocal diffusion equation has its own merit. More specifically, the anti-diffusion in a local partial differential equation (PDE), {\em i.e.}, the reversed heat equation, is ill-posed and unstable in finite time. The stability property of the proposed nonlocal network is important, because it ensures that we can stack multiple nonlocal blocks within a single stage to fully exploit their advantages in capturing long-range features.

\subsection{Markov Jump Process}
The proposed nonlocal network in Eq.~\eqref{nl-net-new} also shares some common features with the discrete-time Markov chain with jump processes \cite{du2012analysis}. In this part, we assume, without loss of generality, that $Z_i$'s and $X_i$'s are scalars. In general, the component-wise properties can lead to similar properties of the vector field. Given a Markov jump process $\mathcal{Z}_t$ confined to remain in a bounded domain $\Omega\subset\mathbb{R}^d$, assume that $z(\xb,t)$ is the corresponding probability density function. Then a general master equation to describe the evolution of $z$ \cite{kenkre1973generalized} can be written as
\begin{equation}\label{mcj-cont}
z_t(\xb,t) = \int_\Omega \left[ \gamma(\xb',\xb,t)z(\xb',t)-\gamma(\xb,\xb',t)z(\xb,t)\right] d\xb'\,,
\end{equation}
where $\gamma(\xb',\xb,t)$ denotes the transition rate from $\xb'$ to $\xb$ at time $t$. Assume that the Markov process is time-homogeneous, namely $\gamma(\xb',\xb,t)=\gamma(\xb',\xb)$. Then in the discrete time form \cite{zhao2016generalized}, Eq.~\eqref{mcj-cont} is often reformulated as 
\begin{equation}\label{mcj-dis}
z(\xb,t)-z(\xb,t')=\int_\Omega \left[ p(\xb',\xb)z(\xb',t') -p(\xb,\xb')z(\xb,t')\right] d\xb'\,,
\end{equation}
where $p(\xb',\xb):=(t-t')\gamma(\xb',\xb)$ represents the transition probability of a particle moving from $\xb'$ to $\xb$. One can easily see that as $t'\to t$, the solution to Eq.~\eqref{mcj-dis} converges to the continuum solution to Eq.~\eqref{mcj-cont}. Note that 
$
\int_\Omega p(\xb,\xb') d\xb'=1
$, Eq.~\eqref{mcj-dis} reduces to
\begin{equation}
z(\xb,t)=\int_\Omega p(\xb',\xb)z(\xb',t') d\xb'\,.
\end{equation}
Suppose that there is a set of finite states drawn from $\Omega$, namely $\xb_1,\xb_2,\cdots,\xb_M$, and finite discrete time intervals, namely $t_1,t_2,\cdots, t_N$. Let $Z_i^n = z(\xb_i,t_n)$ with an initial condition $Z_i^0=X_i$. If we further introduce the kernel matrix $\mathcal{K}$, where $\mathcal{K}_{ij}:=\frac{1}{\mathcal{M}(\xb'_j)}\int_{\mathcal{M}(\xb'_j)} p(\xb_i, \xb')d\xb'$,  and the average of $p(\xb_i,\xb')$ over the grid mesh of $\xb'_j$. Then the kernel matrix $\mathcal{K}$ is a \textit{Markov matrix} (with each column summing up to 1) and
\begin{equation}
Z^{n+1}_i = \sum_{\forall j} \mathcal{K}_{ij} Z^n_j
\end{equation}
describes a discrete-time Markov jump process. The above system is equivalent to
\begin{equation}
Z^{n+1}_i - Z^{n}_i = \sum_{\forall j} \mathcal{K}_{ij} \left( Z^n_j - Z^n_i \right)\,,
\end{equation}
 which is exactly the proposed nonlocal stage regardless of the weight coefficients in front of the Markov operator. %Since we confine the particles within a bounded domain, the above equation can be viewed as a censored process and thus the probability should be conserved over $\Omega$, which can be easily checked by the fact that for all $n\ge 0$,
%\begin{equation}
%\sum_{\forall i} Z_i^{n+1} - \sum_{\forall i}Z^n_i = \sum_{\forall i}\sum_{\forall j} \mathcal{K}_{ij} (Z^n_j-Z^n_i) =0\,.
%\end{equation}

On the other hand, our proposed nonlocal network with negative weight matrices can be regraded as the reverse Markov jump process. Again, since we have the condition that
\begin{equation}
\|\mathcal{K}\|_F^2 < \infty\,,
\end{equation}
the Markov operator $\mathcal{K}$ is a Hilbert-Schmidt operator, which is bounded and implies that the Markov jump processes with both time directions are stable in finite time.

\subsection{Further Discussion}\label{sec:disc}
In this section, we have derived some properties of the proposed nonlocal network. Due to the nature of a Hilbert-Schmidt operator, we expect the robustness of our network. However, we should remark that whether the exact discrete formula in Eq.~\eqref{nl-net-new} has stable dynamics also depends on the weights $W^n$'s. This is ignored for simplicity when connecting to other nonlocal models. In practice, the stability holds as long as the weight parameters are small enough such that the \textit{CFL condition} is satisfied. 

In comparison, for the original nonlocal network, we define a discrete nonlinear nonlocal operator as
\begin{equation}\label{nl-op-2}
(\mathcal{\tilde{L}}^h Z)_i:= -\sum_{\forall j} \omega(Z_i,Z_j)Z_j\,.
\end{equation}
Then the flow of the original nonlocal network with small negative coefficients can be described as
\begin{equation}\label{discuss-nl-eq}
Z^{n+1} - Z^n = \mathcal{\tilde{L}}^h Z^n\,,
\end{equation}
with the initial condition $Z^0=X$. By letting $Z^{n+1}=Z^n$, the steady-state equation of Eq.~\eqref{discuss-nl-eq} can be written as
\begin{equation}
\mathcal{\tilde{L}}^hZ = 0\,.
\end{equation}
Since the kernel $\omega$ is strictly non-negative, the only steady-state solution is $Z\equiv 0$, which means that the output signals of original nonlocal blocks tend to be damped out (diffused) along iterations. However, 
%it is easy to check that $\mathcal{\tilde{L}}^h$ is no longer a Hilbert-Schmidt operator, therefore
 the original nonlocal operation with positive eigenvalues is unstable in finite time. While one can still learn long-range features in the network, stacking several original blocks will cast uncertainty to the model and thus requires extra work to study the initialization strategy or to fine-tune the parameters.

In applications, the nonlocal stage is usually employed and plugged into ResNets. From the viewpoint of PDEs, the flow of a ResNet can be seen as the forward Euler discretization of a dynamical system with respect to $\zb(t)$ \cite{weinan2017proposal}:
\begin{equation}\label{resnet-ds}
\frac{d\zb}{dt} = F\big(\zb, W(t)\big), \quad \zb(0) = X\,.
\end{equation}
The right-hand side $F$ for a PreResNet is given by
\begin{equation}
F\big(\zb,W(t)\big) = W_2(t) f \left (W_1(t) f(\zb) \right)\,,
\end{equation}
%However, in order to make comparison with nonlocal networks, we introduce the continuously changing space variable $\xb$ and rewrite the above dynamical system in a non-traditional way, namely the following nonlinear PDE:
%\begin{equation}\label{resnet-pde}
%\zb_t(\xb,t) =  W_2(t)\cdot f\left(W_1(t)\cdot f(\zb(\xb)) \right)\,,
%\end{equation}
%with an initial condition $\zb(0,\xb)=\ub(\xb)$ for all $\xb\in\Omega$. 
which can be regarded as a reaction term in a PDE because it does not involve any differential operator explicitly. From the standard PDE theory, incorporating the proposed nonlocal stage to ResNets is equivalent to introducing diffusion terms to the reaction system. As a result, the diffusion terms can regularize the PDE and thus make it more stable \cite{evans1997partial}.

\section{Conclusion}\label{sec:con}
In this paper, we studied the damping effect of the existing nonlocal networks in the literature by performing the spectrum analysis on the weight matrices of the well-trained networks. Then we proposed a new class of nonlocal networks, which can not only capture the long-range dependencies but also be shown to be more stable in the network dynamics and more robust to the number of nonlocal blocks. Therefore, we can stack more nonlocal blocks in order to fully exploit their advantages. In the future, we aim to investigate the proposed nonlocal network on other challenging real-world learning tasks.

\subsubsection*{Acknowledgments}
This work is supported in part by US NSF CCF-1740833, DMR-1534910 and DMS-1719699. Y. Tao wants to thank Tingkai Liu and Xinpeng Chen for their help on the experiments.

\begin{small}
\bibliography{nlnn}

\begin{thebibliography}{29}
\providecommand{\natexlab}[1]{#1}
\providecommand{\url}[1]{\texttt{#1}}
\expandafter\ifx\csname urlstyle\endcsname\relax
  \providecommand{\doi}[1]{doi: #1}\else
  \providecommand{\doi}{doi: \begingroup \urlstyle{rm}\Url}\fi

\bibitem[Beran et~al.(1995)Beran, Sherman, Taqqu, and Willinger]{beran1995long}
Beran, Jan, Sherman, Robert, Taqqu, Murad~S, and Willinger, Walter.
\newblock Long-range dependence in variable-bit-rate video traffic.
\newblock \emph{IEEE Transactions on communications}, 43\penalty0
  (234):\penalty0 1566--1579, 1995.

\bibitem[Buades et~al.(2005{\natexlab{a}})Buades, Coll, and
  Morel]{buades2005non}
Buades, Antoni, Coll, Bartomeu, and Morel, J-M.
\newblock A non-local algorithm for image denoising.
\newblock In \emph{Computer Vision and Pattern Recognition, 2005. CVPR 2005.
  IEEE Computer Society Conference on}, volume~2, pp.\  60--65. IEEE,
  2005{\natexlab{a}}.

\bibitem[Buades et~al.(2005{\natexlab{b}})Buades, Coll, and
  Morel]{buades2005review}
Buades, Antoni, Coll, Bartomeu, and Morel, Jean-Michel.
\newblock A review of image denoising algorithms, with a new one.
\newblock \emph{Multiscale Modeling \& Simulation}, 4\penalty0 (2):\penalty0
  490--530, 2005{\natexlab{b}}.

\bibitem[Chung(1997)]{chung1997spectral}
Chung, Fan~RK.
\newblock \emph{Spectral graph theory}.
\newblock Number~92. American Mathematical Soc., 1997.

\bibitem[Coifman \& Lafon(2006)Coifman and Lafon]{coifman2006diffusion}
Coifman, Ronald~R and Lafon, St{\'e}phane.
\newblock Diffusion maps.
\newblock \emph{Applied and computational harmonic analysis}, 21\penalty0
  (1):\penalty0 5--30, 2006.

\bibitem[Cont(2005)]{cont2005long}
Cont, Rama.
\newblock Long range dependence in financial markets.
\newblock In \emph{Fractals in engineering}, pp.\  159--179. Springer, 2005.

\bibitem[Du et~al.(2012)Du, Gunzburger, Lehoucq, and Zhou]{du2012analysis}
Du, Qiang, Gunzburger, Max, Lehoucq, Richard~B, and Zhou, Kun.
\newblock Analysis and approximation of nonlocal diffusion problems with volume
  constraints.
\newblock \emph{SIAM review}, 54\penalty0 (4):\penalty0 667--696, 2012.

\bibitem[Elman(1991)]{elman1991distributed}
Elman, Jeffrey~L.
\newblock Distributed representations, simple recurrent networks, and
  grammatical structure.
\newblock \emph{Machine learning}, 7\penalty0 (2-3):\penalty0 195--225, 1991.

\bibitem[Evans(1997)]{evans1997partial}
Evans, Lawrence~C.
\newblock Partial differential equations and monge-kantorovich mass transfer.
\newblock \emph{Current developments in mathematics}, 1997\penalty0
  (1):\penalty0 65--126, 1997.

\bibitem[Gilboa \& Osher(2007)Gilboa and Osher]{gilboa2007nonlocal}
Gilboa, Guy and Osher, Stanley.
\newblock Nonlocal linear image regularization and supervised segmentation.
\newblock \emph{Multiscale Modeling \& Simulation}, 6\penalty0 (2):\penalty0
  595--630, 2007.

\bibitem[He et~al.(2016)He, Zhang, Ren, and Sun]{he2016identity}
He, Kaiming, Zhang, Xiangyu, Ren, Shaoqing, and Sun, Jian.
\newblock Identity mappings in deep residual networks.
\newblock In \emph{European Conference on Computer Vision}, pp.\  630--645.
  Springer, 2016.

\bibitem[Ioffe \& Szegedy(2015)Ioffe and Szegedy]{ioffe2015batch}
Ioffe, Sergey and Szegedy, Christian.
\newblock Batch normalization: Accelerating deep network training by reducing
  internal covariate shift.
\newblock In \emph{International Conference on Machine Learning}, pp.\
  448--456, 2015.

\bibitem[Kenkre et~al.(1973)Kenkre, Montroll, and
  Shlesinger]{kenkre1973generalized}
Kenkre, VM, Montroll, EW, and Shlesinger, MF.
\newblock Generalized master equations for continuous-time random walks.
\newblock \emph{Journal of Statistical Physics}, 9\penalty0 (1):\penalty0
  45--50, 1973.

\bibitem[Krizhevsky \& Hinton(2009)Krizhevsky and
  Hinton]{krizhevsky2009learning}
Krizhevsky, Alex and Hinton, Geoffrey.
\newblock Learning multiple layers of features from tiny images.
\newblock 2009.

\bibitem[LeCun et~al.(1995)LeCun, Bengio, et~al.]{lecun1995convolutional}
LeCun, Yann, Bengio, Yoshua, et~al.
\newblock Convolutional networks for images, speech, and time series.
\newblock \emph{The handbook of brain theory and neural networks},
  3361\penalty0 (10):\penalty0 1995, 1995.

\bibitem[LeCun et~al.(2015)LeCun, Bengio, and Hinton]{lecun2015deep}
LeCun, Yann, Bengio, Yoshua, and Hinton, Geoffrey.
\newblock Deep learning.
\newblock \emph{nature}, 521\penalty0 (7553):\penalty0 436, 2015.

\bibitem[Lee et~al.(2015)Lee, Xie, Gallagher, Zhang, and Tu]{lee2015deeply}
Lee, Chen-Yu, Xie, Saining, Gallagher, Patrick, Zhang, Zhengyou, and Tu,
  Zhuowen.
\newblock Deeply-supervised nets.
\newblock In \emph{Artificial Intelligence and Statistics}, pp.\  562--570,
  2015.

\bibitem[Liu \& Chang(2009)Liu and Chang]{liu2009robust}
Liu, Wei and Chang, Shih-Fu.
\newblock Robust multi-class transductive learning with graphs.
\newblock \emph{CVPR}, 2009.

\bibitem[Liu et~al.(2010)Liu, He, and Chang]{liu2010large}
Liu, Wei, He, Junfeng, and Chang, Shih-Fu.
\newblock Large graph construction for scalable semi-supervised learning.
\newblock In \emph{Proceedings of the 27th international conference on machine
  learning (ICML-10)}, pp.\  679--686, 2010.

\bibitem[Liu et~al.(2012)Liu, Wang, and Chang]{liu2012robust}
Liu, Wei, Wang, Jun, and Chang, Shih-Fu.
\newblock Robust and scalable graph-based semisupervised learning.
\newblock \emph{Proceedings of the IEEE}, 100\penalty0 (9):\penalty0
  2624--2638, 2012.

\bibitem[Nair \& Hinton(2010)Nair and Hinton]{nair2010rectified}
Nair, Vinod and Hinton, Geoffrey~E.
\newblock Rectified linear units improve restricted boltzmann machines.
\newblock In \emph{Proceedings of the 27th international conference on machine
  learning (ICML-10)}, pp.\  807--814, 2010.

\bibitem[Pipiras \& Taqqu(2017)Pipiras and Taqqu]{pipiras2017long}
Pipiras, Vladas and Taqqu, Murad~S.
\newblock \emph{Long-range dependence and self-similarity}, volume~45.
\newblock Cambridge University Press, 2017.

\bibitem[Silling(2000)]{silling2000reformulation}
Silling, Stewart~A.
\newblock Reformulation of elasticity theory for discontinuities and long-range
  forces.
\newblock \emph{Journal of the Mechanics and Physics of Solids}, 48\penalty0
  (1):\penalty0 175--209, 2000.

\bibitem[Tadmor(2015)]{tadmor2015mathematical}
Tadmor, Eitan.
\newblock Mathematical aspects of self-organized dynamics: consensus, emergence
  of leaders, and social hydrodynamics.
\newblock \emph{SIAM News}, 48\penalty0 (9), 2015.

\bibitem[Wang et~al.(2018)Wang, Girshick, Gupta, and He]{wang2017non}
Wang, Xiaolong, Girshick, Ross, Gupta, Abhinav, and He, Kaiming.
\newblock Non-local neural networks.
\newblock \emph{CVPR}, 2018.

\bibitem[Weinan(2017)]{weinan2017proposal}
Weinan, E.
\newblock A proposal on machine learning via dynamical systems.
\newblock \emph{Communications in Mathematics and Statistics}, 5\penalty0
  (1):\penalty0 1--11, 2017.

\bibitem[Willinger et~al.(2003)Willinger, Paxson, Riedi, and
  Taqqu]{willinger2003long}
Willinger, Walter, Paxson, Vern, Riedi, Rolf~H, and Taqqu, Murad~S.
\newblock Long-range dependence and data network traffic.
\newblock \emph{Theory and applications of long-range dependence}, pp.\
  373--407, 2003.

\bibitem[Wu et~al.(2018)Wu, Jia, Liu, Ghanem, and Lyu]{wu2018multi}
Wu, Baoyuan, Jia, Fan, Liu, Wei, Ghanem, Bernard, and Lyu, Siwei.
\newblock Multi-label learning with missing labels using mixed dependency
  graphs.
\newblock \emph{International Journal of Computer Vision}, pp.\  1--22, 2018.

\bibitem[Zhao et~al.(2016)Zhao, Wang, Ma, and Du]{zhao2016generalized}
Zhao, Yanxiang, Wang, Jiakou, Ma, Yanping, and Du, Qiang.
\newblock Generalized local and nonlocal master equations for some stochastic
  processes.
\newblock \emph{Computers \& Mathematics with Applications}, 71\penalty0
  (11):\penalty0 2497--2512, 2016.

\end{thebibliography}
\bibliographystyle{icml2018}
\end{small}

\appendix
\section*{Supplementary Material}
In the supplementary material, we aim to show some decay properties of the nonlocal equation
\begin{equation}\label{nl-eq-con}
\left\{
\begin{aligned}
&z_t(\xb,t)-\mathcal{L} z(\xb)=0\,,\\
& z(\xb,0)=u(\xb)\,,
\end{aligned}
\right.
\end{equation}
for $\xb\in\Omega\subset\mathbb{R}^d$ and $t\ge 0$. Here the nonlocal operator is defined as
\begin{equation}\label{nl-op}
\mathcal{L}z(\xb) := \int_\Omega \rho(\xb,\yb)\big(z(\yb)-z(\xb)\big)d\yb\,,
\end{equation}
where the kernel
\begin{equation}
\rho(\xb,\yb):=\omega(u(\xb),u(\yb))
\end{equation}
is symmetric and positivity-preserved. We should remark first that in all the proofs presented below, we always omit the time variable $t$ if not referred to. For simplicity, we always assume that $z(\xb)$ is a scalar function. 
We first derive some properties of the nonlocal operator $\mathcal{L}$.
\begin{proposition}
The nonlocal operator $\mathcal{L}$ defined in Eq.~\eqref{nl-op} admits the following properties:
\begin{itemize}
\item[(i)] If $z(\xb)$ is a constant function for all $\xb\in\Omega$, then $\mathcal{L}z\equiv 0$.
\item[(ii)] $\mathcal{L}z$ is mean zero for any function $z$, namely,
\begin{equation}
\int_\Omega \mathcal{L} z(\xb) d\xb = 0\,.
\end{equation}
\item[(iii)] $-\mathcal{L}$ is positive semi-definite, namely,
\begin{equation}
-\int_\Omega \mathcal{L} z(\xb) z(\xb) d\xb \ge 0\,,
\end{equation}
for any function $z$.
\end{itemize}
\end{proposition}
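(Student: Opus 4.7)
The plan is to verify each of the three claims by direct calculation, exploiting two structural assumptions on $\rho$ separately: the symmetry $\rho(\xb,\yb)=\rho(\yb,\xb)$ and the non-negativity $\rho \ge 0$. Part (i) uses neither; part (ii) uses symmetry alone; part (iii) uses both. In all three cases Fubini is justified because the finite Frobenius-norm hypothesis yields $\rho \in L^2(\Omega\times\Omega)$, and the proof reduces to a bookkeeping exercise on a double integral, so there is no substantive obstacle.

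For (i), I would simply observe that if $z(\xb)\equiv c$ then the integrand $\rho(\xb,\yb)\bigl(z(\yb)-z(\xb)\bigr)$ vanishes pointwise in $\yb$, so $\mathcal{L}z(\xb)=0$ for every $\xb\in\Omega$.

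For (ii), I would write
\[
\int_\Omega \mathcal{L}z(\xb)\,d\xb = \int_\Omega\!\int_\Omega \rho(\xb,\yb)\bigl(z(\yb)-z(\xb)\bigr)\,d\yb\,d\xb,
\]
then in a second copy of the same integral relabel $\xb\leftrightarrow\yb$ and invoke $\rho(\xb,\yb)=\rho(\yb,\xb)$ to get the negative of the first expression. The sum of the two equals twice the original integral and also equals $0$, so the integral itself is zero.

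For (iii), the key step is a standard symmetrization. Denote $I:=-\int_\Omega \mathcal{L}z(\xb)z(\xb)\,d\xb$. Expanding and applying Fubini gives
\[
I=-\int_\Omega\!\int_\Omega \rho(\xb,\yb)\bigl(z(\yb)-z(\xb)\bigr)z(\xb)\,d\yb\,d\xb.
\]
Swapping the labels $\xb\leftrightarrow\yb$ in a second copy and using symmetry of $\rho$ yields an equivalent expression in which $z(\xb)$ is replaced by $z(\yb)$ and $(z(\yb)-z(\xb))$ by $(z(\xb)-z(\yb))$. Adding the two representations of $I$, the cross terms combine into a perfect square, producing
\[
2I=\int_\Omega\!\int_\Omega \rho(\xb,\yb)\bigl(z(\yb)-z(\xb)\bigr)^2 d\yb\,d\xb,
\]
which is manifestly non-negative since $\rho\ge 0$, giving $I\ge 0$.

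The hardest part is really just sign-tracking in the symmetrization of (iii); no analytic difficulty arises, since the $L^2$ integrability of $\rho$ together with a mild integrability assumption on $z$ suffices to apply Fubini and justify every relabeling. If desired, the same identity in (iii) also records a semi-norm $\tfrac12\iint \rho (z(\yb)-z(\xb))^2$, which is natural in view of the connection to graph/continuum Laplacians discussed in Section \ref{sec:relation}.
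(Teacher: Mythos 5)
Your proposal is correct and follows essentially the same route as the paper's proof: (i) is immediate, (ii) uses the swap $\xb\leftrightarrow\yb$ with symmetry of $\rho$, and (iii) symmetrizes to obtain $\tfrac12\iint\rho\,(z(\yb)-z(\xb))^2\ge 0$. Your added remark that the finite Frobenius-norm hypothesis justifies Fubini is a small extra point of rigor the paper leaves implicit, but the argument is the same.
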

\begin{proof}
Property (i) is straightforward. Property (ii) is true by the symmetry of the kernel function, namely,
\[
\begin{aligned}
\int_\Omega\mathcal{L} z(\xb)d\xb&=\int_\Omega\int_\Omega \rho(\xb,\yb)(z(\yb)-z(\xb))d\yb d\xb\\
&=\frac{1}{2}\int_\Omega\int_\Omega \rho(\xb,\yb)(z(\yb)-z(\xb))d\yb d\xb +\frac{1}{2}\int_\Omega\int_\Omega \rho(\yb,\xb)(z(\xb)-z(\yb))d\yb d\xb\\
& = 0\,.
\end{aligned}
\]
For Property (iii), using the symmetry of $\rho$ again we can get
\[
\begin{aligned}
-\int_\Omega \mathcal{L}z(\xb) z(\xb)d\xb &= -\int_\Omega\int_\Omega \rho(\xb,\yb)(z(\yb)-z(\xb))z(\xb)d\yb d\xb \\
&=\frac{1}{2}\int_\Omega\int_\Omega \rho(\xb,\yb)(\zb(\yb)-\zb(\xb))^2 d\yb d\xb\,.
\end{aligned}
\]
Since $\rho(\xb,\yb)\ge 0$ for all $\xb,\yb\in\Omega$, the above is always non-negative.
\end{proof}
It is easy to see that the discrete nonlocal operator $\mathcal{L}^h$ defined in Section 4.2 of the main paper has the same properties in the discrete form. Now we can study the properties of the nonlocal integral-differential equation \eqref{nl-eq-con}. The following gives a formal theorem on this observation.

\begin{theorem}
The flow of Eq.~\eqref{nl-eq-con} admits the following properties:
\begin{itemize}
\item[(i)] The mean value is preserved, namely,
\begin{equation}
\int_\Omega z(t,\xb) d\xb = \int_\Omega u(\xb) d\xb,\quad \text{for all } t\ge 0\,.
\end{equation}
\item[(ii)] As $t\to\infty$, the solution converges to a constant, which is the mean value of the initial condition $u$, namely,
\begin{equation}
\lim_{t\to\infty} z(t,\xb) = \frac{1}{|\Omega|} \int_\Omega u(\xb)d\xb\,,
\end{equation}
where $|\Omega|$ is the measure of the space domain.
\end{itemize}
\end{theorem}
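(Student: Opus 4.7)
The proof splits naturally along the two items. For (i), the plan is to differentiate $\int_\Omega z(\xb,t)\,d\xb$ in time, interchange the derivative and the integral (justified by the $L^2$ regularity of the semigroup generated by the bounded operator $\mathcal{L}$), substitute $z_t = \mathcal{L}z$ from the equation, and apply Property (ii) of the preceding Proposition to conclude $\frac{d}{dt}\int_\Omega z\,d\xb = \int_\Omega \mathcal{L}z\,d\xb = 0$. Integrating from $0$ to $t$ and using the initial condition then gives the conservation identity.

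For (ii), the first step is to center the solution around the conserved mean. Writing $\bar u := \tfrac{1}{|\Omega|}\int_\Omega u(\xb)\,d\xb$ and $w(\xb,t) := z(\xb,t) - \bar u$, Property (i) of the Proposition (constants are annihilated by $\mathcal{L}$) shows that $w$ solves the same nonlocal heat equation, and part (i) of this theorem guarantees $\int_\Omega w(\xb,t)\,d\xb \equiv 0$. Thus convergence $z(\xb,t) \to \bar u$ reduces to decay $w(\xb,t) \to 0$ on the mean-zero subspace.

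The next step is an energy estimate. Differentiating $E(t) := \tfrac{1}{2}\int_\Omega w^2\,d\xb$ and invoking Property (iii) gives
\begin{equation*}
\frac{dE}{dt} = \int_\Omega w\,\mathcal{L}w\,d\xb = -\frac{1}{2}\int_\Omega\!\int_\Omega \rho(\xb,\yb)\bigl(w(\yb)-w(\xb)\bigr)^2 d\yb\,d\xb \le 0,
\end{equation*}
so $E$ is non-increasing. To upgrade monotonicity to decay, I would appeal to a nonlocal Poincaré-type inequality on the mean-zero subspace: there exists $\lambda_1 > 0$ such that
\begin{equation*}
\int_\Omega\!\int_\Omega \rho(\xb,\yb)\bigl(w(\yb)-w(\xb)\bigr)^2 d\yb\,d\xb \;\ge\; 2\lambda_1 \int_\Omega w^2\,d\xb
\end{equation*}
for every $w$ with vanishing mean. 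Gronwall then yields $\|w(\cdot,t)\|_{L^2}^2 \le e^{-2\lambda_1 t}\|u-\bar u\|_{L^2}^2$, from which the $L^2$ version of the limit in (ii) follows, and pointwise (a.e.) convergence of a subsequence comes for free; uniform convergence can be promoted using boundedness of $\mathcal{L}$ together with standard parabolic smoothing.

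The principal obstacle is the spectral gap $\lambda_1 > 0$. Since $-\mathcal{L}$ is self-adjoint and Hilbert–Schmidt (hence compact) on $L^2(\Omega)$, its spectrum is discrete and accumulates only at $0$, and Properties (i), (iii) identify $0$ as an eigenvalue whose eigenspace contains the constants. The gap amounts to showing the kernel of $-\mathcal{L}$ consists only of constants, which in turn requires the kernel $\rho$ to couple $\Omega$ in an irreducible way (e.g., a positive lower bound on a neighborhood of the diagonal, or a connectedness-type condition propagated through $\rho$). I would state this irreducibility as a standing hypothesis on the affinity, consistent with the positivity and symmetry already assumed in Eq.~\eqref{kernel-cond}, and then conclude.
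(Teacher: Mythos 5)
Your part (i) is identical to the paper's: differentiate the mean, substitute $z_t=\mathcal{L}z$, and invoke the mean-zero property of $\mathcal{L}$. For part (ii), however, you take a genuinely different and more quantitative route. The paper stops at the energy inequality $\frac{d}{dt}\int_\Omega z^2\,d\xb = 2\int_\Omega z\,\mathcal{L}z\,d\xb \le 0$ and argues softly that the dissipation is strictly negative unless $z$ is constant, hence the solution ``must converge'' to a constant, pinned down by (i); no rate and no Poincar\'e inequality appear at this stage. You instead center the solution, prove the same energy estimate on the mean-zero subspace, and immediately invoke a spectral-gap/nonlocal Poincar\'e inequality plus Gronwall to get exponential $L^2$ decay --- which is precisely the content of the paper's later Theorem~4 and Lemma~5, so you have effectively folded that result into the proof of this one. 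Your version buys rigor: the paper's ``strictly decreasing unless constant'' step does not by itself yield convergence without an invariance-principle or compactness argument, and you correctly flag the two points the paper glosses over, namely (a) that the kernel of $-\mathcal{L}$ reduces to constants only under an irreducibility/connectedness hypothesis on $\rho$ (the paper silently assumes this when it claims the infimum in Lemma~5 is attained and positive), and (b) that the $L^2$ limit does not automatically give the pointwise limit asserted in the statement. The paper's version buys brevity and avoids committing to extra hypotheses, at the cost of these gaps. Your proposal is correct modulo the standing irreducibility assumption you explicitly announce, which is a reasonable and arguably necessary addition.
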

\begin{proof}
For Property (i), by taking the time derivative of the integral of $z$ over $\Omega$, we can get that
\[
\frac{d}{dt} \int_\Omega z(t,\xb)d\xb = \int_\Omega z_t(t,\xb) d\xb = \int_\Omega \mathcal{L} z(\xb)d\xb\,.
\]
From Property (ii) of Proposition 1, the above equations are equal to 0, which means that the mean value does not change over time.

For Property (ii), we first prove another property of energy decay, namely
\[
\frac{d}{dt}\int_\Omega z(t,\xb)^2d\xb \le 0\,.
\]
Indeed, by Property (iii) of Proposition 1, we have
\[
\begin{aligned}
\frac{d}{dt}\int_\Omega z(t,\xb)^2d\xb &= 2\int_\Omega z_t(t,\xb) z(t,\xb)d\xb\\
&=2\int_\Omega \mathcal{L} z(\xb) z(\xb)d\xb\\
&\le 0\,.
\end{aligned}
\]
Moreover, in the proof of Property (iii) of Proposition 1, we can see that the above estimate is strictly negative unless  $z(\xb)$ is a constant for any $\xb\in\Omega$. Then as $t\to\infty$, $z(t,\xb)$ must converge to a constant function, which has to be $\frac{1}{|\Omega|}\int_\Omega u(\xb)d\xb$ by Property (i) of this theorem.
\end{proof}

Another observation is that the variance of $z(t,\xb)$ is decreasing over iterations. The variance is defined as
\begin{equation}
\text{var}(z):=\frac{1}{|\Omega|} \int_\Omega \left( z(\xb)-\frac{1}{|\Omega|}\int_\Omega z(\yb)d\yb \right)^2 d\xb\,.
\end{equation}
Then we can derive the following theorem.
\begin{theorem}
Let $z(t,\xb)$ be the solution to Eq.~\eqref{nl-eq-con}, then
\begin{equation}
\frac{d}{dt} \text{var}(z) \le 0\,.
\end{equation}
\end{theorem}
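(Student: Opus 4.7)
The plan is to differentiate $\text{var}(z)$ under the integral sign and then invoke the conservation-of-mean property together with the positive semi-definiteness of $-\mathcal{L}$ established in Proposition 1 and Theorem 2.

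First I would expand
\[
\text{var}(z)=\frac{1}{|\Omega|}\int_\Omega\!\Bigl(z(\xb)-\frac{1}{|\Omega|}\int_\Omega z(\yb)d\yb\Bigr)^2 d\xb,
\]
and use Theorem 2(i), which tells me that the spatial mean $\bar z := \frac{1}{|\Omega|}\int_\Omega z(t,\yb)d\yb$ is independent of $t$. This is the key simplification: when differentiating in $t$, I do not need to worry about the inner integral, since $\frac{d\bar z}{dt}=0$. Hence
\[
\frac{d}{dt}\text{var}(z)=\frac{2}{|\Omega|}\int_\Omega (z(\xb)-\bar z)\, z_t(\xb)\, d\xb.
\]

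Next I would substitute $z_t=\mathcal{L}z$ from Eq.~\eqref{nl-eq-con} and split the integral:
\[
\frac{d}{dt}\text{var}(z)=\frac{2}{|\Omega|}\int_\Omega z(\xb)\mathcal{L}z(\xb)d\xb-\frac{2\bar z}{|\Omega|}\int_\Omega \mathcal{L}z(\xb)d\xb.
\]
The second term vanishes immediately by Property (ii) of Proposition 1 (mean-zero of $\mathcal{L}z$), and the first term is non-positive by Property (iii) of Proposition 1 (positive semi-definiteness of $-\mathcal{L}$). Combining these gives $\frac{d}{dt}\text{var}(z)\le 0$.

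There is essentially no obstacle here: the argument is a three-line reduction once the two properties of $\mathcal{L}$ and the conservation of mean are in hand. The only subtlety worth mentioning is the justification of differentiating under the integral sign, which is fine under the mild regularity already implicit in the well-posedness of Eq.~\eqref{nl-eq-con} (e.g.\ $z(t,\cdot)\in L^2(\Omega)$ with a continuous-in-$t$ $L^2$ derivative, which follows from $\mathcal{L}$ being a bounded Hilbert–Schmidt operator). Once that is noted in a single sentence, the rest is a direct application of the tools already proved.
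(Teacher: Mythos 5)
Your proof is correct and follows essentially the same route as the paper: differentiate the variance using the time-invariance of the mean, substitute $z_t=\mathcal{L}z$, kill the cross term via the mean-zero property of $\mathcal{L}z$, and conclude by the positive semi-definiteness of $-\mathcal{L}$. The remark on differentiating under the integral sign is a reasonable extra precaution but not something the paper dwells on.
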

\begin{proof}
Taking the time derivative of the variance, we can get
\[
\begin{aligned}
\frac{d}{dt} \text{var}(z) &= \frac{2}{|\Omega|} \int_\Omega \left( z(t,\xb)-\frac{1}{|\Omega|}\int_\Omega u(\yb)d\yb\right) z_t(t,\xb) d\xb\\
&= \frac{2}{|\Omega|}\left(\int_\Omega z(\xb) \mathcal{L}z(\xb)d\xb - \frac{1}{|\Omega|}\int_\Omega u(\yb)d\yb \int_\Omega \mathcal{L}z(\xb)d\xb\right)\\
&=\frac{2}{|\Omega|}\int_\Omega z(\xb) \mathcal{L}z(\xb)d\xb\\
&\le 0\,,
\end{aligned}
\]
where in the derivation we have used the fact that $\int_\Omega \mathcal{L}z(\xb)d\xb = 0$\,.
\end{proof}

We are also interested in the convergence rate of $z(t,\xb)$ to its mean value with respect to $t$. The following theorem tells us that the solution to Eq.~\eqref{nl-eq-con} decays to its mean value with an exponential rate.
\begin{theorem}
Suppose $u\in L^2(\Omega)$, \textit{i.e.}, square integrable. Then as $t\to\infty$, the solution $z$ to Eq.~\eqref{nl-eq-con} satisfies
\begin{equation}
\|z(t,\xb)-\bar{u}\|_{L^2(\Omega)} \le C e^{-\lambda t} \,,
\end{equation}
with some positive constants $C$ and $\lambda$, where
\begin{equation}
\bar{u} = \frac{1}{|\Omega|} \int_\Omega u(\xb) d\xb\,.
\end{equation}
\end{theorem}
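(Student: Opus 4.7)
The plan is to set $w(t,\xb) := z(t,\xb) - \bar{u}$, reduce the statement to a decay estimate on $w$, and then extract exponential decay from a nonlocal Poincar\'e--type inequality (equivalently, a spectral gap for $-\mathcal{L}$ on the mean-zero subspace of $L^2(\Omega)$). Since $\mathcal{L}$ annihilates constants by Property (i) of Proposition~1, $w$ satisfies the same equation $w_t = \mathcal{L} w$, with initial condition $w(0,\xb) = u(\xb) - \bar{u}$. By Theorem~2(i), $\int_\Omega w(t,\xb)d\xb = 0$ for all $t \ge 0$, so $w$ lives in the mean-zero subspace $L^2_0(\Omega)$ for all time.

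Next I would write down the $L^2$ energy identity by the same computation used in the proof of Theorem~3:
\begin{equation*}
\frac{d}{dt}\|w(t,\cdot)\|_{L^2(\Omega)}^2 = 2\int_\Omega w(\xb)\,\mathcal{L} w(\xb)\,d\xb = -\int_\Omega\!\!\int_\Omega \rho(\xb,\yb)\bigl(w(\yb)-w(\xb)\bigr)^2 d\yb\,d\xb.
\end{equation*}
Exponential decay then reduces to showing a constant $\lambda > 0$ such that, for every $w \in L^2_0(\Omega)$,
\begin{equation*}
\int_\Omega\!\!\int_\Omega \rho(\xb,\yb)\bigl(w(\yb)-w(\xb)\bigr)^2 d\yb\,d\xb \;\ge\; 4\lambda\,\|w\|_{L^2(\Omega)}^2.
\end{equation*}
Given this inequality, Gr\"onwall immediately yields $\|w(t,\cdot)\|_{L^2}^2 \le \|w(0,\cdot)\|_{L^2}^2\,e^{-4\lambda t}$, hence the claimed bound with $C = \|u - \bar{u}\|_{L^2}$ and rate $\lambda$.

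The main step, and the only one that is not routine, is the nonlocal Poincar\'e inequality. The approach is spectral: because $\rho$ is symmetric with $\int\!\!\int \rho^2 < \infty$, the integral operator $\mathcal{K} w(\xb) := \int_\Omega \rho(\xb,\yb)w(\yb)d\yb$ is self-adjoint and Hilbert--Schmidt on $L^2(\Omega)$, so it is compact. The normalization $\int_\Omega \rho(\xb,\yb)d\yb = 1$ gives $\mathcal{L} = \mathcal{K} - I$, so the eigenvalues of $-\mathcal{L}$ are $1 - \mu_n$, where $\{\mu_n\}$ is the (discrete, bounded, accumulating only at $0$) spectrum of $\mathcal{K}$. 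The constant function is an eigenfunction of $\mathcal{K}$ with eigenvalue $1$; compactness forces this eigenvalue to be isolated, and under the standing positivity/connectivity assumption on $\rho$ it is simple, with the constants as its eigenspace. Consequently there is a spectral gap $\lambda := \tfrac{1}{2}(1 - \mu_2) > 0$, where $\mu_2$ is the next largest eigenvalue, and the Rayleigh quotient characterization gives $-\langle w, \mathcal{L} w\rangle \ge 2\lambda\|w\|^2$ for all $w$ orthogonal to constants, i.e.\ for $w \in L^2_0(\Omega)$.

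The principal obstacle is justifying the spectral gap itself, i.e.\ ruling out eigenvalues of $\mathcal{K}$ accumulating at $1$ and ensuring that $1$ is a simple eigenvalue. Compactness handles the accumulation issue, and simplicity follows from a Perron--Frobenius--style argument using the nonnegativity of $\rho$ together with a mild connectivity condition (e.g.\ $\rho(\xb,\yb) > 0$ on a set large enough that no nontrivial partition of $\Omega$ is $\rho$-invariant). Once this is in place, the rest of the proof is a two-line application of Gr\"onwall to the energy identity.
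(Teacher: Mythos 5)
Your overall architecture matches the paper's: subtract the mean, observe that $\bar z = z-\bar u$ solves the same equation and stays mean-zero, write the $L^2$ energy identity, invoke a nonlocal Poincar\'e inequality on the mean-zero subspace, and close with Gr\"onwall. Where you genuinely diverge is in how the Poincar\'e inequality (the paper's Lemma~5) is obtained. The paper argues by contradiction with a minimizing sequence: it takes $\|z_n\|_{L^2}=1$, mean zero, with energy tending to $0$, asserts that $\{z_n\}$ is precompact in $L^2(\Omega)$, passes to a strong limit, and concludes the limit is a nonzero constant with zero mean. You instead argue spectrally: $\mathcal{K}$ is Hilbert--Schmidt, hence compact and self-adjoint, $\mathcal{L}=\mathcal{K}-I$, the constants are an eigenspace of $\mathcal{K}$ for the eigenvalue $1$, and since nonzero eigenvalues of a compact operator can accumulate only at $0$, the eigenvalue $1$ is isolated, yielding a gap $1-\mu_2>0$ on the orthogonal complement of the constants. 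Your route is arguably cleaner and more self-contained: the paper's assertion that a bounded sequence with vanishing nonlocal energy is precompact in $L^2$ is exactly the delicate point (for a general Hilbert--Schmidt kernel the energy form is $\|z\|^2$ minus a compact quadratic form, which does not by itself force strong compactness of the sequence), whereas your argument replaces that step with the standard spectral theorem. Both routes, however, silently require the same extra hypothesis that the paper never states: some irreducibility or connectivity of $\rho$, since if $\Omega$ splits into two pieces with $\rho\equiv 0$ between them, a piecewise-constant mean-zero function has zero energy and the inequality fails. You flag this explicitly (Perron--Frobenius simplicity of the eigenvalue $1$), which is to your credit; just be aware that it is an additional assumption on the kernel, not something that follows from symmetry, nonnegativity, and the finite Frobenius norm alone. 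Your constant bookkeeping ($4\lambda$ in the Poincar\'e inequality, decay rate $2\lambda\ge\lambda$ after taking square roots) is consistent.
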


Before we give the proof of Theorem 4, we establish a nonlocal-type Poincar\'e inequality as follows.
\begin{lemma}[Nonlocal Poincar\'e Inequality]
Supposed that $z(\xb)\in L^2(\Omega)$ and $\int_\Omega z(\xb)d\xb = 0$. Then there exists a constant $C>0$ such that
\[
\int_\Omega z(\xb)^2 d\xb \le C \int_\Omega\int_\Omega \rho(\xb,\yb)(z(\yb)-z(\xb))^2d\yb d\xb\,.
\]
\end{lemma}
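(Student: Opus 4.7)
The plan is to reduce the inequality to a spectral-gap estimate for the nonlocal operator $\mathcal{L}$ on the mean-zero subspace of $L^2(\Omega)$. Recall from the proof of Property (iii) of Proposition 1 the algebraic identity
$$-2\int_\Omega z(\xb)\mathcal{L}z(\xb)\,d\xb = \int_\Omega\int_\Omega \rho(\xb,\yb)\bigl(z(\yb)-z(\xb)\bigr)^2\,d\yb\,d\xb,$$
so the right-hand side of the lemma is exactly $-2\langle z,\mathcal{L}z\rangle_{L^2}$. It therefore suffices to exhibit $\lambda_1 > 0$ such that $-\langle z,\mathcal{L}z\rangle \ge \lambda_1\|z\|_{L^2}^2$ whenever $\int_\Omega z\,d\xb = 0$, at which point the lemma follows with $C = 1/(2\lambda_1)$.

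Next I would decompose $\mathcal{L}$ via $\mathcal{L}z(\xb) = \mathcal{T}z(\xb) - z(\xb)$, where $\mathcal{T}$ is the integral operator with kernel $\rho$ and I have invoked the row-normalization $\int_\Omega \rho(\xb,\yb)\,d\yb = 1$ from the main text. Symmetry and finite Frobenius norm of $\rho$ make $\mathcal{T}$ a compact self-adjoint Hilbert-Schmidt operator on $L^2(\Omega)$, so its spectrum consists of a real sequence $\{\mu_k\}$ accumulating only at $0$, with largest eigenvalue $\mu_0 = 1$ whose eigenspace contains the constants. Hence $-\mathcal{L} = I - \mathcal{T}$ has nonnegative spectrum $\{1-\mu_k\}$, and on the orthogonal complement of the constants its smallest spectral value is $1-\mu_1$, where $\mu_1 := \sup_{k\ge 1}\mu_k$.

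The crucial step is then to show that $\mu_0 = 1$ is a simple eigenvalue, so that $\mu_1 < 1$ strictly and the restriction of $\mathcal{T}$ to the mean-zero subspace has operator norm $\mu_1 < 1$, yielding $\langle(I-\mathcal{T})z,z\rangle \ge (1-\mu_1)\|z\|^2$ with $\lambda_1 = 1-\mu_1$. Simplicity follows from a Perron-Frobenius-style argument: any eigenfunction $z$ at eigenvalue $1$ satisfies $-\langle z,\mathcal{L}z\rangle = 0$, which by the identity above forces $\rho(\xb,\yb)(z(\yb)-z(\xb))^2 = 0$ a.e., and irreducibility of the support of $\rho$ then forces $z$ to be constant.

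The main obstacle is precisely this irreducibility step, which is not guaranteed by symmetry and nonnegativity of $\rho$ alone; some form of connectivity is required. If the stronger quantitative hypothesis $\rho(\xb,\yb) \ge \rho_{\min} > 0$ is imposed, a short direct fallback is available: bound the right-hand side below by $\rho_{\min}\int\int(z(\yb)-z(\xb))^2\,d\yb\,d\xb$, expand the square, and use $\int z\,d\xb = 0$ to get $2|\Omega|\int z^2\,d\xb$, giving the explicit constant $C = 1/(2|\Omega|\rho_{\min})$. I would present this elementary calculation as the robust backup in case the spectral-gap route needs extra structural assumptions.
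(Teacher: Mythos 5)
Your proposal is correct in substance but takes a genuinely different route from the paper. The paper argues by contradiction: it defines $m$ as the infimum of the Dirichlet form over mean-zero functions of unit $L^2$ norm, supposes $m=0$, takes a minimizing sequence $\{z_n\}$, asserts that this sequence is precompact in $L^2(\Omega)$, passes to a strong limit $z_\infty$ with $\|z_\infty\|_{L^2}=1$ and vanishing Dirichlet energy, and concludes that $z_\infty$ must be constant, hence zero --- a contradiction. You instead go directly through the spectral theory of the compact self-adjoint Hilbert--Schmidt operator $\mathcal{T}$ with kernel $\rho$, identifying the Poincar\'e constant as $1/(2(1-\mu_1))$ via the spectral gap. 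Your route is more quantitative and makes explicit where compactness actually enters: the paper's bare assertion that a bounded sequence in $L^2$ is precompact is unjustified as stated, and repairing it requires precisely the compactness of $\mathcal{T}$ that you place at the center of the argument (weak convergence of $z_n$ combined with norm convergence of $\mathcal{T}z_n$). Your reduction of the right-hand side to $-2\langle z,\mathcal{L}z\rangle$, the bound $\|\mathcal{T}\|\le 1$ from the row normalization, and the invariance of the mean-zero subspace are all sound.

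The obstacle you flag --- simplicity of the eigenvalue $1$, equivalently that vanishing Dirichlet energy forces constancy --- is a genuine gap, and it is equally present (but unacknowledged) in the paper's proof: the line ``the above integral is $0$ if and only if $z_\infty$ is a constant function'' is exactly your irreducibility step and does not follow from symmetry and nonnegativity of $\rho$ alone. If the support of $\rho$ disconnects $\Omega$ into two pieces of positive measure, a suitably weighted difference of their indicator functions is mean-zero, nonconstant, and has zero Dirichlet energy, so the inequality fails; some connectivity hypothesis must be added for the lemma to hold as stated. Your elementary fallback under $\rho\ge\rho_{\min}>0$ is correct --- expanding the square and using $\int_\Omega z\,d\xb=0$ does give $C=1/(2|\Omega|\rho_{\min})$ --- and it is the most robust, fully rigorous version of the statement.
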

\begin{proof}
We show that
\[
0 < m = \inf_{z(\xb)\in L^2(\Omega), \|z\|_{L^2}=1}\left( \int_\Omega\int_\Omega \rho(\xb,\yb)(z(\yb)-z(\xb))^2d\yb d\xb\right)\,.
\]
Clearly, $m\ge 0$. Suppose that $m=0$, then there exists a sequence $z_n(\xb)\in L^2(\Omega)$ and $\int_\Omega z_n(\xb)d\xb=0$ such that, for all $n$, $\|z_n\|_{L^2}=1$ and
\[
\lim_{n\to\infty}  \int_\Omega\int_\Omega \rho(\xb,\yb)(z_n(\yb)-z_n(\xb))^2d\yb d\xb=0\,.
\]
Then $\{z_n\}$ is precompact in $L^2(\Omega)$, which means that there exists a strong limit $z_\infty$ of $z_n$ in $L^2(\Omega)$. Then, on one hand, we have $\|z_\infty\|_{L^2}=1$ and $\int_\Omega z_\infty(\xb)d\xb=0$. On the other hand,
\[
\begin{aligned}
0 &= \lim_{n\to \infty} \int_\Omega\int_\Omega \rho(\xb,\yb)(z_n(\yb)-z_n(\xb))^2d\yb d\xb \\
&=\lim_{n\to \infty} -2\int_\Omega \mathcal{L} z_n(\xb) z_n(\xb) d\xb\\
&=-2\int_\Omega \mathcal{L} z_\infty(\xb) z_\infty(\xb) d\xb\\
&=  \int_\Omega\int_\Omega \rho(\xb,\yb)(z_\infty(\yb)-z_\infty(\xb))^2d\yb d\xb\,.
\end{aligned}
\]
The above integral is 0 if and only if $z_\infty$ is a constant function. Then by $\int_\Omega z_\infty(\xb)d\xb = 0$, we must have $z_\infty\equiv 0$, which contradicts to $\|z_\infty\|_{L^2}=1$. This completes the proof.
\end{proof}

Now we can start to prove Theorem 4.
\begin{proof}[Proof of Theorem 4]
Let $\bar{z}(t,\xb)=z(t,\xb)-\bar{u}$. Then we have
\[
\int_\Omega \bar{z}(t,\xb)d\xb=0\quad \text{for all } t\ge 0\,.
\]
Since $u\in L^2(\Omega)$, it is also easy to get that $\bar{z}(\cdot,\xb)\in L^2(\Omega)$. Moreover, $\bar{z}$ is the solution to
\[
\left\{
\begin{aligned}
&\bar{z}_t(t,\xb)-\int_\Omega \rho(\xb,\yb)(\bar{z}(t,\yb)-\bar{z}(t,\xb))d\yb=0\,,\\
& \bar{z}(0,\xb)=u(\xb)-\bar{u}\,.
\end{aligned}
\right.
\]
Then by the nonlocal Poincar\'e inequality, there exists a constant $\lambda>0$ such that
\[
\begin{aligned}
\frac{d}{dt} \int_\Omega \bar{z}(t,\xb)^2d\xb &= -\int_\Omega \rho(\xb,\yb)(\bar{z}(\yb)-\bar{z}(\xb))^2d\yb d\xb \\
& \le -2\lambda \int_\Omega \bar{z}(\xb)^2d\xb\,.
\end{aligned}
\]
Therefore, we can derive that
\[
\frac{d}{dt} \left( e^{2\lambda t}\int_\Omega \bar{z}(t,\xb)^2d\xb\right) \le 0\,.
\]
Let $C=\left(\int_\Omega \bar{z}(0,\xb)^2d\xb\right)^{1/2}$. Then we have
\[
\|z(t,\xb)-\bar{u}\|_{L^2(\Omega)} \le Ce^{-\lambda t}\,.
\]
\end{proof}

To sum up, all the properties derived from the nonlocal equation are also true for local diffusions. Moreover, since the nonlocal equation \eqref{nl-eq-con} is a continuum generalization of the proposed nonlocal network which can be regarded as the discretization of Eq.~\eqref{nl-eq-con}, we can view the formulation of the proposed nonlocal blocks as an analogue of local diffusive terms.

\end{document}